\documentclass{article}
\usepackage{amsmath,amsthm,amsfonts,amssymb,latexsym,mathrsfs}
\usepackage{graphics,color}
\usepackage{bm}
\usepackage{times}
\usepackage{mathtools}
\usepackage{tkz-euclide}
\usepackage{array,tabularx}
\usepackage{pstricks-add, pst-eucl}

\usepackage{tabularx,booktabs}

\usepackage{xspace}
\usepackage{hyperref}
\usepackage{hhline}
\usepackage{mathtools}

\definecolor{myblue}{rgb}{0.09,0.32,0.44} 
\hypersetup{pdfborder={0 0 0},pdfborderstyle={},colorlinks=true,linkcolor=myblue,citecolor=myblue,
urlcolor=blue}

\usepackage[shortlabels]{enumitem}

\makeatletter
\renewcommand\part{\@startsection{part}{2}%
	\z@{0.5\linespacing\@plus2\linespacing}{\linespacing}%
	{\normalfont\large\scshape\bfseries\centering}}

\usepackage[utf8]{inputenc}

\usepackage{upgreek}
\usepackage{enumitem}

\usepackage{graphicx,tikz}
\usetikzlibrary{shapes.geometric, arrows}

\setlength{\textheight}{230mm} \setlength{\textwidth}{160mm}
\setlength{\oddsidemargin}{1.25mm}
\setlength{\evensidemargin}{1.25mm} \setlength{\topmargin}{0mm}

\pagestyle{myheadings}

\usepackage [autostyle, english = american]{csquotes}
\MakeOuterQuote{"}
\usepackage{amsthm}

\numberwithin{equation}{section}
\newtheorem{theorem}{Theorem}[section]
\newtheorem{lemma}[theorem]{Lemma}
\newtheorem{cor}[theorem]{Corollary}

{}

\theoremstyle{remark*}


\theoremstyle{remark}

\theoremstyle{definition}
\newtheorem{definition}[theorem]{Definition}




\usepackage[final]{neurips_2023}


\usepackage[utf8]{inputenc} 
\usepackage[T1]{fontenc}    
\usepackage{hyperref}       
\usepackage{url}            
\usepackage{booktabs}       
\usepackage{amsfonts}       
\usepackage{nicefrac}       
\usepackage{microtype}      
\usepackage{xcolor}         

\title{On the Consistency of Maximum Likelihood Estimation of Probabilistic Principal Component Analysis}

%

\author{%
  Arghya Datta\thanks{arghya.datta@umontreal.ca} \\
  Université de Montréal\\
  Department of Mathematics and Statistics\\
  \And
  Sayak Chakrabarty\thanks{sayakchakrabarty2025@u.northwestern.edu} \\
  Northwestern University \\
  Department of Computer Science \\
}

\begin{document}

\maketitle

\begin{abstract}
Probabilistic principal component analysis (PPCA) is currently one of the most used statistical tools to reduce the ambient dimension of the data. From multidimensional scaling to the imputation of missing data, PPCA has a broad spectrum of applications ranging from science and engineering to quantitative finance.\\

Despite this wide applicability in various fields, hardly any theoretical guarantees exist to justify the soundness of the maximum likelihood (ML) solution for this model. In fact, it is well known that the maximum likelihood estimation (MLE) can only recover the true model parameters up to a rotation. The main obstruction is posed by the inherent identifiability nature of the PPCA model resulting from the rotational symmetry of the parameterization. To resolve this ambiguity, we propose a novel approach using quotient topological spaces and in particular, we show that the maximum likelihood solution is consistent in an appropriate quotient Euclidean space. Furthermore, our consistency results encompass a more general class of estimators beyond the MLE. Strong consistency of the ML estimate and consequently strong covariance estimation of the PPCA model have also been established under a compactness assumption.
\end{abstract}

\section{Introduction}

In the era of big data, principal component analysis (PCA) is a standard dimension reduction tool frequently used in exploratory data analysis. PCA is however not a statistical model, implying that uncertainty quantification is not available. Uncertainty quantification is desirable in some situations, like when principal components have an interpretation. Probabilistic principal component analysis (PPCA) is a model introduced by \cite{tipping1999probabilistic} to remedy the situation. It consists of an additive model with a normal noise term. Let $p>q$ be two given positive integers. The PPCA model can be written as $x=\textbf{W}z+\varepsilon$, where $x\in\mathbb{R}^p$ denotes a random vector that is observed after a linear transformation $\textbf{W}$ is applied to a latent (hidden) variable $z\in\mathbb{R}^q$ up to a gaussian noise $\varepsilon\sim \mathcal{N}(0,\sigma^2I_p)$. We assume that our data is centered, meaning $x$ has zero mean. In practice, the latent variable $z$ is not observed and is conventionally assumed to be normally distributed. Given a set of $n$ data points or observations $\{x_1,x_2,\ldots,x_n\}\subset \mathbb{R}^p$, the two unknown parameters $\textbf{W}$ and $\sigma^2$ are estimated by maximum likelihood estimation (MLE).\\
PPCA is a very well-known and pervasive technique used in dimension reduction of high-dimensional data and used in figuring out the relevant dimensions in large-scale data mining. It must be said that the primary goal of PPCA is not always to perform better than traditional PCA but open up a series of possibilities for future extensions for further analysis. PPCA also enables comparison with other probabilistic techniques. Namely, it is possible to quantify the uncertainty in parameters \textbf{W} and $\sigma^2$ through a Bayesian reformulation \citep{tipping1999probabilistic}. One can also consider modelling the error terms using heavy-tailed distributions (such as Student-t) to achieve robustness \citep{gai2008robust}. These models have appealing properties; in particular, the quality of the information retained from the original data is similar to that of the original PCA with added flexibility.\\\\
Despite the wide applicability, nothing is known about the consistency of the maximum likelihood (ML) estimation of the PPCA. The main obstruction is posed by the identifiability of the PPCA model over the parameter space $(\textbf{W},\sigma^2)\in\mathbb{M}(p,q)\times \mathbb{R}_{+}$, where $\mathbb{M}(p,q)$  and $\mathbb{R}_{+}$ denote the set of all $p\times q$ matrices and the set of positive real numbers respectively. One can restrict the attention only to rank $q$ matrices, but as we shall see, doing so will not significantly make our goal any harder (or easier) to achieve as the ML estimation of $\textbf{W}$ has rank $q$. It can be readily seen that our model induces a Gaussian distribution on the data points $x\sim \mathcal{N}(0,\textbf{W}\textbf{W}^T+\sigma^2I_p)$, which is to be used for the maximum likelihood estimation. From a frequentist point of view, we now assume that the unknown parameters $(\textbf{W},\sigma^2)$ has a true value which is equal to $(\textbf{W}_0,\sigma^2_0)$. Additionally, we observe that the marginal distribution of the data points $\mathcal{N}(0,\textbf{W}\textbf{W}^T+\sigma^2I_p)$ used for the ML estimation remains invariant under any translation of $\textbf{W}$ by an orthogonal matrix $\textbf{R}$ since $(\textbf{W}\textbf{R})(\textbf{W}\textbf{R})^T=\textbf{W}\textbf{W}^T$. As can be seen in the literature it is only possible to recover the maximum likelihood estimate of $\textbf{W}$ up to rotations \citep{tipping1999probabilistic}. This lack of identifiability puts an immediate roadblock to directly applying Wald's consistency theorems \citep{wald1949note}, a well-known method to guarantee the consistency of maximum likelihood estimate.\\
It turns out that the quotients of the Euclidean spaces and in particular, the quotient of the space $\mathbb{M}(p,q)\times \mathbb{R}_{+}\subset \mathbb{R}^{p\times q}\times \mathbb{R}$ is the natural topological object to consider to talk about the consistency results of the PPCA model. In a simplified situation when $\sigma^2$ is known, heuristically, the quotient space treats all the points  $\textbf{W}_0$ and its rotational translates $\textbf{W}_0\textbf{R}$ as a single entity by labelling them equivalent and therefore throwing all of them into a single equivalence class. Therefore, the parameter space for $\textbf{W}$, i.e. $\Sigma=\mathbb{M}(p,q)$ becomes a set of equivalence classes $\Tilde{\Sigma}=\{{[\textbf{W}]}:\textbf{W}\in\ \mathbb{M}(p,q)\}$ through an equivalence relation $\textbf{W}'\in[\textbf{W}]$ iff $\textbf{W}'\textbf{W}'^T=\textbf{W}\textbf{W}^T$ or equivalently $\textbf{W}'=\textbf{W}\textbf{R}$, for some orthogonal matrix $\textbf{R}$. In this setting, we thus get rid of the identifiability issue by passing through the quotient of $\mathbb{R}^{p\times q}\times \mathbb{R}$ via the proposed equivalence relation. We then study the consistency of the maximum likelihood in this space. The formal details about the definition of quotient space by a given equivalence relation can be found in Section \ref{others}.\\

The above approach is inspired by \cite{redner1981note} where the author outlines a way to extend Wald's consistency theorems \citep{wald1949note} under the non-identifiability assumption. We would like to mention that, even though passing through the quotient topological space eliminates the rotational ambiguity of the ML estimate by definition, it introduces many further technical issues. To the best of our knowledge, all the subtleties have not been properly addressed in Redner's work even though the author applies the proposed methodology to analyze the consistency of ML estimates of mixture models. One of the key things that is not explicitly mentioned in their work is that the underlying metric space structure of the quotient topological space can be substantially different from the natural topological structure of the quotient space itself and the topology generated by the open balls with the induced metric in quotient space is in general, not same as that of the usual quotient topology. Thus related geometric properties, for instance, the notion of convergence and other regularity properties of the parameter space become more delicate to analyze. We address all these underlying issues carefully later in section 4.\\\\
To the best of our knowledge, no previous research work has attempted to justify the wide usage of the ML estimate of the PPCA model because of the inherent identifiability nature of the model parameters. To resolve this challenge we propose a novel framework to see the PPCA model through the lens of an appropriate quotient of an Euclidean space. This perspective not only allows us to formalize the problem in precise mathematical terms but also helps us resolve the dispute caused by the non-identifiability. 

\textit{Considering from the point of novelty, the contributions of this paper are two-fold:}  we prove that the maximum likelihood estimates $(\widehat{\textbf{W}},\widehat{\sigma}^2)$ are consistent, i.e., $(\widehat{\textbf{W}},\widehat{\sigma}^2)\to ({\textbf{W}}_0,{\sigma_0}^2)$ in probability in the appropriate quotient Euclidean space. In fact, our consistency results are valid for a general family of estimators that in particular, contains the maximum likelihood estimator. Furthermore, strong consistency, i.e., an almost sure convergence of the ML estimates to the true parameter is also deduced when the parameter space is compact.\\\\
\textit{The rest of the paper is organized as follows:} section 2 outlines a brief description of the necessary background followed by a comprehensive discussion of relevant research works. In section 3, we formally introduce the PPCA model, and the ML estimates of the parameters of interest and talk about the main question we are interested in. In section 4, we provide several basic definitions and mention a few key results from the theory of quotient topological spaces in a self-contained fashion. Section 5 contains all our contributions followed by their detailed proofs in section 6. In section 7, we state Wald's conditions and interpret them within the quotient parameter space. Finally, in section 8, we talk about the impact of our work and put forward a few open questions before concluding in section 9.

\section{Background and Previous Work}
The abstract formalism concerning quotient topological spaces is not foreign to the theory of applied statistics. For an in-depth discussion about the application of quotient space into various statistical models, see \cite{mccullagh1999quotient}. Even though, the necessary tools and definition of topological quotients are introduced in section 4, an inquisitive reader is certainly encouraged to consult the classic \cite{munkrestopology}. The main line of attack used in this paper has been previously introduced in \cite{redner1981note}, where the author proves the consistency of mixture models under the standard regularity conditions of Wald's consistency theorems \citep{wald1949note}. It is worthwhile to mention that the theoretical justification of using the maximum likelihood estimates in practice has gained a lot of recent interest. For instance, \cite{chen2017consistency} has a rigorous treatment of several consistency results with an application to non-parametric mixture models.\\

In the previous section, we have already mentioned that PPCA offers many advantages over traditional PCA, particularly in terms of model flexibility and the potential for generalization to accommodate various statistical reformulations. Building upon the recognition of its strength by \cite{bengio2013representation}, \cite{goodfellow2016deep}, and \cite{ruff2021unifying} as one of the most notable advancements in probabilistic models, subsequent research endeavours have leveraged PPCA to achieve remarkable results in various applications. Within the realm of cryo-electron microscopy (cryo-EM) in the field of biology, extensive discussions in the literature have explored its efficacy, as evidenced by the works of \cite{heimowitz2018apple,penczek2011identifying,tagare2015directly}. Similarly, in the domain of computer vision, the probabilistic model has proven its superiority, as demonstrated by \cite{szeliski2022computer} and others.
 While the precise reasons underlying its success in certain situations remain elusive, PPCA has found widespread applicability in diverse domains. It has found great applications in incremental learning for visual tracking \citep{lim2004incremental}, in the study of Gaussian processes  \citep{bonilla2007multi,lawrence2005probabilistic}, orthogonal signal correction (OSC) \citep{lee2023probabilistic},
weighted nuclear norm minimization \citep{gu2017weighted}, and in outlier detection algorithms like \citep{domingues2018comparative}. Recent advancements have showcased its potential in areas such as few-shot learning \citep{wang2020instance}, data mining techniques \citep{witten2002data}, and most notably, in finance for exploiting market integration for pure alpha investments \citep{tzagkarakis2013exploiting}. The broad spectrum of applications where PPCA has exhibited remarkable performance underscores its versatility and its value as a powerful tool in numerous domains.\\

As we have already noted, the theoretical justification behind the ML estimates used for the PPCA model in practice is not always straightforward and has recently attracted the attention of several authors. In a fairly recent work \citep{cherief2019consistency}, the authors provide a consistent estimator of the quantity $\textbf{W}\textbf{W}^T$ assuming $\sigma^2$ is known. Their work is primarily concerned with the model selection questions in statistics using a variational approach. Even though in our work we assume the rank of the matrix $\text{rank}(\textbf{W})=q$ to be known and is fixed by the user (this assumption can be removed as long as $q$ does not grow with the sample size $n$), it is of independent interest to want to estimate $q$. In their work \citep{cherief2019consistency}, the authors constructed an estimator of the quantity $q$ to demonstrate how closely their estimated model recovers the true model corresponding to the true value of \textbf{W}, they were also able to estimate the marginal covariance of the data points and thereby came up with a consistent estimator of $\textbf{W}\textbf{W}^T$. One may notice that, in this case, estimating $\textbf{W}\textbf{W}^T$ instead of $\textbf{W}$ eliminates the non-identifiability issue. It is important to mention that their work does not use the maximum likelihood approach and instead uses variational techniques to construct an estimator of $\textbf{W}\textbf{W}^T$. In this regard, it is relevant to mention that another work \citep{bouveyron2011intrinsic} was able to provide a consistent estimator of the parameter $q$ using maximum likelihood estimation. Finally, we restate that no related previous work attempts to justify the commonly used ML estimates $(\widehat{\textbf{W}},\widehat{\sigma}^2)$ of the PPCA model, perhaps due to ambiguity caused by the inherent non-identifiable parameter space. In the following section, we begin by presenting the PPCA model and concerned ML estimates originally developed in 
\cite{tipping1999probabilistic}

\section{Problem Statement}
\textbf{Notation}\hspace{0.15435cm} In what follows, we shall always let capital letters denote random variables and their deterministic realizations will be denoted by lowercase letters. Let $X_1, X_2,\ldots$ be a sequence of independent identically distributed random variables where the p.d.f. of the distribution of $X_1$ is known except a parameter $\theta=(\textbf{W}_\theta,\sigma_\theta^2)$ in some parameter space $\Theta\subset \mathbb{R}^{p\times q}\times \mathbb{R}_{+}$ for some matrix $\textbf{W}_\theta$ and positive integers $p,q$. We assume the distribution of $X_1$ is generated from a true parameter $\theta_0$. For a given parameter $\theta=(\textbf{W}_\theta,\sigma_\theta^2)$, we let $f(x;\theta)$ denote the normal p.d.f. $\mathcal{N}(x; 0,\textbf{W}_\theta\textbf{W}_\theta^T+\sigma_\theta^2I_p)$. Henceforth, the notation $\|\textbf{W}\|$ will mainly refer to the spectral norm, which is defined as the largest eigenvalue of the matrix $\textbf{W}\textbf{W}^T$. It is worth noting that in some rare instances, we have employed the Frobenius norm, but such usage has been explicitly mentioned.\\
Unless otherwise stated, all the probabilities and expectations are taken with respect to the true parameter $\theta_0$. Finally, for a $m\times m$ matrix $\textbf{A}$, we denote its characteristic polynomial $\text{det}(\lambda I_m-A)$ by $\mathcal{P}_{A}(\lambda)$.\\\\
\textbf{The PPCA model:}\hspace{0.06cm}
Let us consider that we have access to a structured dataset taking the form of a matrix $\mathbf{X}$ whose lines are given by $x_1, \ldots, x_n \in \mathbb{R}^p$ with $p$ a large positive integer. We assume that the columns of $\mathbf{X}$ are centered (meaning a mean of $0$). We assume 
\begin{equation}\label{eq1}
x_i=\textbf{W}z_i+\varepsilon_i,
\end{equation}
where $z_1, \ldots, z_n, \varepsilon_1, \ldots, \varepsilon_n$ are i.i.d. the realizations of independent random variables with $z\sim \mathcal{N}(0, I_q)$ and $\varepsilon \sim \mathcal{N}(0, \sigma^2 I_p)$, and $\textbf{W}$ is an unknown fixed $p \times q$ matrix, $\sigma > 0$ being an unknown fixed scale parameter and $I_q$ and $I_p$ being identity matrices of size $q$ and $p$, respectively. The random variable $z$ plays the role of a hidden (latent) variable and $\varepsilon$ that of an error term.  We recall that $\text{rank}(\textbf{W})=q$. To estimate the parameters $\textbf{W}$ (often called the loading matrix) and the unknown variance $\sigma^2$ of the homoscedastic error term $\epsilon$, \cite{tipping1999probabilistic} employs the maximum likelihood (ML) procedure and solutions are given as follows 
 $$\widehat{\textbf{W}}=\textbf{U}\left(\Delta_q-\sigma^2 I_q \right)^{1/2}\textbf{R}\text{  and  }\widehat{\sigma}^2=\dfrac{1}{p-q}\sum\limits_{j=q+1}^{p}\delta_j,$$
 where $\textbf{R}$ is any $q\times q$ rotation matrix, the columns of $\textbf{U}$ are given by the first $q$ eigen vectors of the sample covariance matrix $S_{x}=1/n\sum\limits_{i=1}^n x_ix_i^T$ and $\Delta_q=\text{ diag}(\delta_1,\ldots,\delta_q)$ are the $q$ eigenvalues of $S_x$ sorted in the descending order. We note that the solution $\widehat{\textbf{W}}$, in this case, is not unique since \textbf{R} could be any orthogonal matrix which in practice, is often set to be $I_q$. For now, we redistrict our discussion assuming the rank $q$ is fixed. For this model, given the parameters $(\textbf{W},\sigma^2)$, we have the following marginal density for the generated data $\{x_i\}$
$$p(x|\textbf{W},\sigma)=\int p(x|z,\textbf{W},\sigma)p(z)dz =\mathcal{N}(x;0,\textbf{W}\textbf{W}^T+\sigma^2I_p), $$
where we used the fact that the latent variable $z$ is distributed as $\sim \mathcal{N}(0,I_q)$. Thus for our problem, the parameter space is 
$$\Theta:=\left\{\theta:\theta=(\textbf{W}_\theta,\sigma^2_\theta)\in \mathbb{R}^{p\times q}\times \mathbb{R}_{+}\right\}.$$
In the above definition there is a slight abuse of notation as the parameter $\theta$ is self-referencing to itself. Nonetheless, we retain the notation for the sake of clarity and the ease of exposition. We assume the data points $\{x_i\}$ are generated from a true parameter $\theta_0=(\textbf{W}_0,\sigma_0^2)$, i.e., they are i.i.d. samples from normal distribution whose p.d.f. is given by $f(x;\theta_0)$. In this work, we aim to prove the consistency of maximum likelihood estimates, i.e. we want to discuss whether $(\widehat{\textbf{W}},\widehat{\sigma}^2)\to (\textbf{W}_0,\sigma_0^2)$ holds $P_{\theta_0}$ almost surely or in probability.\\\\
Nearly all consistency proofs of well-known statistical models take identifiability for granted. Unfortunately, the PPCA model is not identifiable, since there may be matrices $\textbf{W}_{\theta}$ and $\textbf{W}_\phi$ such that $\textbf{W}_{\theta}\textbf{W}_{\theta}^T=\textbf{W}_\phi\textbf{W}_\phi^T$, in which case $f(x;\theta)=f(x;\phi)$ if $\sigma_\theta =\sigma_\phi$. It is not difficult to see in such cases, $\textbf{W}_{\phi}$ will be a rotational translate of the matrix $\textbf{W}_{\theta}$, i.e., $\textbf{W}_\phi =\textbf{W}_\theta \textbf{R}'$ for some orthogonal matrix $\textbf{R}'.$ This rotational symmetry in parameterization results in a lack of identifiability. In the current work we adopt a similar approach as outlined in \cite{redner1981note} and subsequently work in a quotient topological space of $\Theta$. Next, we rigorously introduce this quotient space framework and develop analytical tools to prove our consistency results. 

\section{A primer on quotient topological spaces}
\label{others}
In this section, we give the definition of a quotient of a topological space by a given equivalence relation and discuss several important results pertinent to our work.
\begin{definition}[Equivalence relation]
Let $S$ be a set. A binary relation $\sim$ on $S$ is said to be an equivalence iff it is (i) reflexive: $x \sim x, \forall x \in S$, (ii) symmetric: $x \sim y \iff y \sim x, \forall x,y \in S$ and (iii) transitive: $x \sim y, y \sim z \implies x \sim z, \forall x,y,z \in S$.\\
It is evident that an equivalence relation on a set induces a partition of the set and vice-versa. The disjoint sets in this partition are called the equivalence classes. If $s\in S$ is given, then we write $[s]:= \{y \in S \mid y \sim s\}$ for the equivalence class that contains $s$ and finally the set of equivalent classes $(S/ \sim):= \{[s] \mid s\in S\}$ are called the \textbf{quotient} of the set $S$ by the equivalence relation $\sim$.
\end{definition}
\subsection{Quotient topological spaces and its metrizability} Given a space $X$ and an equivalence relation $\sim$ on $X$, the set-theoretic quotient $X/\sim$ (the set of equivalence classes) inherits a topology from $X$, called the quotient topology \citep{munkrestopology}. It is well known that the surjective map $\pi: X \rightarrow X/\sim$ defined as $x \rightarrow [x]$ is continuous. We now record a useful result from \cite{munkrestopology}, which will be important in proving our main results in the forthcoming sections.
\begin{theorem}\label{thh}
If $X, X/\sim$ are topological spaces stated as above, then for another topological space $Y$ and for a continuous map $f:X \rightarrow Y$ with the additional property that $x \sim x'$ implies $f(x) = f(x')$, there exists unique continuous map $\Tilde{f}:(X/\sim) \rightarrow Y$ such that $f = \Tilde{f} \circ \pi$
\end{theorem}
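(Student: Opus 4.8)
The statement is the universal property of the quotient topology, so the plan is to construct $\Tilde f$ by hand on equivalence classes, verify well-definedness from the hypothesis on $f$, get the factorization and uniqueness from surjectivity of $\pi$, and obtain continuity directly from the defining property of the quotient topology. No estimates are involved; the only genuinely topological input is the characterization of the quotient topology.

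\textbf{Construction and well-definedness.} First I would define $\Tilde f:(X/\!\sim)\to Y$ by $\Tilde f([x]):=f(x)$, picking any representative $x$ of the class. The point to check is independence of the representative: if $[x]=[x']$ then $x\sim x'$, and the hypothesis that $x\sim x'$ implies $f(x)=f(x')$ shows $\Tilde f([x])$ is unambiguous. By construction $(\Tilde f\circ\pi)(x)=\Tilde f([x])=f(x)$ for all $x\in X$, so $f=\Tilde f\circ\pi$. For uniqueness, if $g:(X/\!\sim)\to Y$ also satisfies $f=g\circ\pi$, then since $\pi$ is surjective every element of $X/\!\sim$ is of the form $\pi(x)$, and $g(\pi(x))=f(x)=\Tilde f(\pi(x))$ forces $g=\Tilde f$.

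\textbf{Continuity.} This is where the topology enters. Recall that the quotient topology makes $\pi$ a quotient map, i.e. a set $U\subseteq X/\!\sim$ is open precisely when $\pi^{-1}(U)$ is open in $X$. Given an open $V\subseteq Y$, I would compute $\pi^{-1}\bigl(\Tilde f^{-1}(V)\bigr)=(\Tilde f\circ\pi)^{-1}(V)=f^{-1}(V)$, which is open in $X$ by continuity of $f$; hence $\Tilde f^{-1}(V)$ is open in $X/\!\sim$, and $\Tilde f$ is continuous.

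\textbf{Expected obstacle.} Honestly there is no hard computational step; the only thing requiring care is that the continuity argument uses the \emph{precise} definition of the quotient topology and would fail for an ad hoc topology on $X/\!\sim$. This is exactly the subtlety flagged in the introduction — the metric-ball topology one later puts on the quotient of $\mathbb{R}^{p\times q}\times\mathbb{R}$ need not coincide with the quotient topology — so it matters that Theorem~\ref{thh} is invoked for the genuine quotient topology, and one should keep track of which topology is in force whenever this factorization is used downstream.
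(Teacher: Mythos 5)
Your proof is correct and is exactly the standard argument (well-definedness from the compatibility hypothesis, uniqueness from surjectivity of $\pi$, continuity from the defining property of the quotient topology); the paper does not prove this result itself but cites it from Munkres, where the same canonical argument appears. Your closing remark about keeping track of which topology is in force when the factorization is used downstream is apt and consistent with the paper's own cautionary discussion.
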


\begin{figure}[h!]
  \centering
  \includegraphics[width=0.3\textwidth]{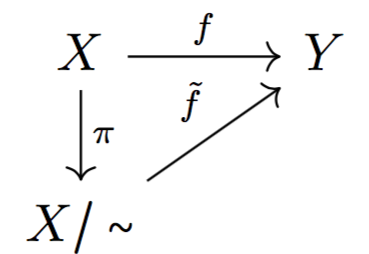}
  
  \caption{Illustration of the above theorem using a commutative diagram.}
\end{figure}

Throughout this paper, we have $Y = \mathbb{R}$ and $X$ is a subset of $\mathbb{R}^p$ for some positive integer $p$, which is a complete metric space and Euclidean distance serves as a natural metric on $\mathbb{R}^p$. The key idea is to extend this metric from $X$ to $X/\sim$ and this job is accomplished through an intermediate state, by introducing a suitable pseudometric on $X$. A pseudometric $\delta$ is a function on $X\times X$ to non-negative real numbers and the only difference between a pseudometric and a metric is that $\delta(x,y)=0$ need not imply $x=y$ for $x,y\in X$. To construct the most commonly used metric on the quotient space $X/\sim$ we first introduce:

let $X\subset \mathbb{R}^p$ be a complete subspace of the Euclidean space and $d$ be the standard Euclidean distance on $\mathbb{R}^p$. For $x,y \in X$, we let
$$d_1(x,y) := \inf \{d(x_1,y_1)+ \ldots + d(x_n,y_n) \mid x \sim x_1, y_i \sim x_{i+1} (1 \leqslant i \leqslant n-1), y_n \sim y\}.$$
Then it can be checked that $d_1$ defines a pseudometric on $X$. We now declare $x \approx y$ whenever $d_1(x,y) = 0$, and it is not difficult to verify the relation $\approx$ is an equivalence. Then we can endow the set-theoretic quotient $X/\approx$ with the metric $\bar{d}$, which we define as $\bar{d}([x],[y]) = d_1(x,y)$ where $[x],[y]$ are the classes of $x,y$ in $X/\approx$. Furthermore, we let $X_{\approx}$ to be the completion of the above space under the metric $\bar{d}$, meaning all the Cauchy sequences converge.\\\\
As the reader may have noticed already, there are two equivalence relations at play on $X$, namely $\sim$ and $\approx$. Although the construction of the space $X/\approx$ is dependant on the equivalence $\sim$ through the definition of $d_1$, it is not straightforward to identify the relationship between the two abstract quotient spaces $X/\sim$ and $X/\approx$. Now is a good time to pause and unveil the primary reason to justify the development of these technical tools. \textit{Our initial goal is to metrize the topological quotient $X/\sim$ for a given equivalence relation $\sim$ so that we could extend Wald's conditions \citep{wald1949note} in $X/\sim$ to prove consistency results in this space}. The procedure outlined in the previous paragraph provides a recipe for constructing a metric $\bar{d}$ on the space $X/\approx$. The next crucial result, Lemma \ref{l1} shows that for certain nice equivalence relations $\sim$, the two spaces $X/\approx$ and $X/\sim$ are identical. This result is a key fact that allows us to maneuver the metric construction into $X/\sim$. We now state the result from \cite{weaver2018lipschitz}. The proof of this result is skipped as it is deemed to be technical and it serves a little purpose to our main goal. For a thorough treatment of metrizatiblity of quotient topological spaces, we refer the inquisitive reader to the comprehensive book on Lipschitz algebras \citep{weaver2018lipschitz}.
\begin{lemma}\label{l1}
If $C$ is a closed subset of a complete metric space $X$. Let $X/C$ denote the quotient space of $X$ by the equivalence defined by $x \sim y$ if either $x=y$ or $x,y \in C$ then the underlying set $X_{\approx}$ and the quotient $X/C$ are the same. The metric on $X/C$ in this case, will take the form:
$$\bar{d}([x],[y]) = \min(d(x,y), d(x,C) + d(y,C))\quad\text{ for any }[x],[y]\in X/C.$$
\end{lemma}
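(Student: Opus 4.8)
The plan is to show the two statements about $X/C$ — that it coincides with $X_{\approx}$ as a set, and that the metric $\bar d$ assumes the stated explicit form — by unwinding the definition of $d_1$ in the special case where the only nontrivial identifications happen inside the closed set $C$. First I would observe that under the relation $x \sim y \iff (x = y \text{ or } x,y \in C)$, an admissible chain $x \sim x_1, y_i \sim x_{i+1}, y_n \sim y$ appearing in the infimum defining $d_1(x,y)$ is extremely rigid: each equivalence used is either the identity (which contributes nothing and can be deleted) or a swap between two points of $C$. Hence after pruning, any chain reduces to one of exactly two shapes: the trivial one-term chain giving $d(x,y)$, or a chain that leaves $x$, jumps into $C$ at some point $c_1 \in C$, possibly wanders inside $C$, exits $C$ at some $c_2 \in C$, and lands on $y$. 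The length of the second type of chain is at least $d(x,c_1) + d(c_2, y) \geq d(x,C) + d(y,C)$, and this bound is approached by choosing $c_1, c_2$ to be near-minimizers (and noting $d(c_1,c_2)$ can be absorbed or made irrelevant since we can take $c_1 = c_2$ is not forced, but the two-term chain $x \to c_1 \to y$ with $c_1 \sim c_2$ both in $C$ already suffices after a further reduction). Taking the infimum over both shapes yields $d_1(x,y) = \min\bigl(d(x,y),\, d(x,C) + d(y,C)\bigr)$.

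Next I would check that this formula for $d_1$ is already a genuine metric on $X/\sim$, i.e. that the auxiliary relation $\approx$ (defined by $d_1(x,y) = 0$) coincides with $\sim$ in this case. Indeed $d_1(x,y) = 0$ forces either $d(x,y) = 0$, hence $x = y$, or $d(x,C) + d(y,C) = 0$; since $C$ is closed, $d(x,C) = 0 \iff x \in C$, so the second case gives $x, y \in C$. Either way $x \sim y$, so no further collapsing occurs when we pass from $d_1$ to $\bar d$ on $X/\approx$; that is, $X/\approx = X/\sim = X/C$ at the level of underlying sets, and $\bar d([x],[y]) = d_1(x,y)$ is the claimed expression. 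The triangle inequality and symmetry for $\bar d$ come for free from the general fact that $d_1$ is a pseudometric, already granted in the excerpt, but one could also verify them directly from the explicit $\min$ formula using the triangle inequality for $d$ and for the point-to-set distance $d(\cdot, C)$.

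Finally I would address completeness, since $X/C$ is asserted to be (or to be identified with) $X_{\approx}$, which was defined as a completion. Here I would argue that $(X/C, \bar d)$ is already complete whenever $X$ is complete: given a $\bar d$-Cauchy sequence $[x_k]$, either infinitely many terms have their distance realized by the "through $C$" branch — in which case the relevant $d(x_k, C)$ are Cauchy in $\mathbb{R}$ and one can show $[x_k]$ converges to the (single) point $[C]$ or to a limit forced by the $d(x,C)$ values — or from some point on the distances are realized by the direct branch $d(x_k, x_\ell)$, making a tail of $(x_k)$ genuinely Cauchy in $X$, with limit $x_\infty$, and then $[x_k] \to [x_\infty]$. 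Hence the completion operation adds nothing and $X_{\approx} = X/C$. The main obstacle I anticipate is precisely this bookkeeping with the two branches of the $\min$: one has to handle chains (and Cauchy sequences) that oscillate between passing through $C$ and not, and argue carefully that such mixed behavior never produces a distance smaller than the stated minimum nor a "new" limit point outside $X/C$ — the reductions are intuitively clear but require a careful case analysis on where consecutive chain links lie relative to $C$, together with the use of closedness of $C$ to turn infima $d(\cdot, C)$ into attained or near-attained values.
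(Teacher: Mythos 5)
Your proposal is essentially correct, but it cannot be compared to ``the paper's own proof'' in the usual sense because the paper gives none: Lemma \ref{l1} is quoted from \cite{weaver2018lipschitz} and the authors explicitly skip the argument as technical. Your direct verification is therefore a self-contained elementary substitute, and its three steps are the right ones: (i) the chain analysis showing $d_1(x,y)=\min\bigl(d(x,y),\,d(x,C)+d(y,C)\bigr)$; (ii) the observation that closedness of $C$ makes $d_1(x,y)=0$ equivalent to $x\sim y$, so $\approx$ and $\sim$ coincide and no further collapsing occurs in passing to $\bar d$; and (iii) completeness of $(X/C,\bar d)$, so that the completion $X_{\approx}$ adds no new points. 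Two spots in your sketch deserve tightening. In (i), a chain may enter and leave $C$ several times, so rather than reducing it to a two-shape normal form it is cleaner to bound from below: if at least one link is a nontrivial identification, let $a\in C$ be the departure point of the first such link and $b\in C$ the arrival point of the last; the prefix of the chain up to $a$ uses only paid segments and trivial links, hence costs at least $d(x,a)\geqslant d(x,C)$, the suffix from $b$ symmetrically costs at least $d(y,C)$, and these two portions are disjoint, giving the lower bound without any pruning; the upper bound is the explicit four-point chain $x,\,c_1,\,c_2,\,y$ with $c_1,c_2\in C$ near-minimizers. In (iii), the clean dichotomy for a $\bar d$-Cauchy sequence $[x_k]$ is on $\liminf_k d(x_k,C)$: if it is $0$, then $\bar d([x_k],[C])=d(x_k,C)\to 0$ along a subsequence and the whole Cauchy sequence converges to the single class $[C]$; if it equals some $\delta>0$, then once $\bar d([x_k],[x_\ell])<\delta$ the branch $d(x_k,C)+d(x_\ell,C)\geqslant\delta$ cannot realize the minimum, so a tail of $(x_k)$ is genuinely Cauchy in $X$ and converges there by completeness of $X$. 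With these repairs your argument is complete, and it buys the reader something the paper does not: an explicit, checkable derivation of the quotient metric formula on which Lemmas \ref{l7.1}--\ref{l7.5} silently rely, in place of a bare citation.
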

In the above statement, $d(x,C)$ denotes the distance between the point $x$ and the closed subset $C$ defined by $d(x,C):=\inf_{c\in C}d(x,c)$ and likewise for $d(y,C)$. An immediate follow-up question of the above result would be: what is the space $X$ and the closed subset $C$ we use in our context? We now recall two things from the previous section, first the parameter space for the PPCA model which is $\Theta \subset \mathbb{R}^{pq+1}$ and second the issue of identifiability, which arose because there could exist two parameters $\theta,\phi\in\Theta$ such that their corresponding densities are equal, which is equivalent to  $\textbf{W}_\theta\textbf{W}_\theta^T+\sigma_\theta^2I_p=\textbf{W}_\phi\textbf{W}_\phi^T+\sigma_\phi^2I_p$.
In what follows, using Redner's idea \citep{redner1981note} we consider the closed subset defined by
$$C:=\{\theta\in\Theta:\textbf{W}_\theta\textbf{W}_\theta^T+\sigma_\theta^2I_p=\textbf{W}_0\textbf{W}_0^T+\sigma_0^2I_p\}\subset \Theta,$$
which allows us to get rid of identifiability since in the quotient topological space $\Theta/C$, $f(x;[\theta])=f(x;[\theta_0])$ if and only if $[\theta]=[\theta_0]$ or equivalently $\theta\sim \theta_0$ by construction. We can now hope for checking Wald's consistency conditions in $\Theta/C$ to prove $[\hat{\theta}]$ converges to $[\theta_0]$ in probability, where $\hat{\theta}=(\hat{W},\hat{\sigma}^2)$ denotes the MLE.\\\\
In the introduction, we pointed out that despite the extension of Wald's work \citep{wald1949note} to quotient topological spaces by \cite{redner1981note}, an inadequate amount of attention was paid to the comprehensive understanding of the interplay between the quotient topology and the topology generated by the metric $\bar{d}$ in that treatment. For instance, a priori it is, by no means obvious how one would interpret their assumption 3 in the quotient space $\Theta/C$ without invoking Lemma \ref{l1}. Also, their assumption 5 does not clearly mention what mode of convergence has been used since there are two available notions of convergence, one coming from the quotient topology and another coming from the metric topology generated by $\bar{d}$, and those two topologies are not same in general. It is a nontrivial fact that the latter is contained in the former (for a discussion on this topic see \cite{james1990introduction}). However, in most statistical applications, for example in the applications to mixture distributions and clustering \citep{redner1981note}, we tend to quotient the parameter space by a suitable closed subspace depending on the context, and subsequently the quotient metric takes an easier form as given in Lemma \ref{l1}. Therefore, several technical constructions get greatly simplified in these applications. 
\section{Theoretical Results}
In this section, we state our main results. Let $\{x_i\}_{i=1}^n\subset \mathbb{R}^p$ be a sequence of data points that are generated from latent observations $\{z_i\}_{i=1}^n\in\mathbb{R}^q$ via the PPCA model defined in \eqref{eq1} and recall from section 3 that the data points are assumed to be generated from the true parameter $\theta_0:=(\textbf{W}_0,\sigma_0^2)\in \Theta$ of the underlying model. Our data points $x_i$ can therefore be viewed as a sequence of realizations of i.i.d. random variables whose true marginal density is given by $f(x;\theta_0)=\mathcal{N}(x;0,\textbf{W}_0\textbf{W}_0^T+\sigma_0^2I_p)$. We note a crucial point that for two parameters $\theta,\phi\in \Theta$, $f(x;\theta)=f(x;\phi)$ holds if and only if $[\theta]=[\phi]$ in $\Theta/C$. Therefore, we write $f(x;\theta)$ for simplicity, instead of cumbersome $f(x;[\theta])$ when stating results in the quotient space $\Theta/C$. In the spirit of \cite{wald1949note} and \cite{redner1981note}, two of our main results are the following:
\begin{theorem}\label{A}
    Let $S$ is any closed subset of $\Theta$ not intersecting $C$ then
    $$\mathbb{P}\left(\lim\limits_{n}\sup\limits_{\theta\in S}\prod\limits_{i=1}^{n}\dfrac{f(x_i;\theta)}{f(x_i;\theta_0)}=0\right)=1.$$
\end{theorem}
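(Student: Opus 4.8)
\noindent The plan is to follow the Wald--Redner scheme: pass to logarithms, turn the product into an average of i.i.d.\ terms, and apply the strong law of large numbers (SLLN) after splitting $S$ into a compact core and a tail that escapes to the boundary of $\Theta$. Write $C_\theta := \mathbf{W}_\theta\mathbf{W}_\theta^{T} + \sigma_\theta^{2}I_p$ and $S_n := \tfrac1n\sum_{i=1}^n x_i x_i^{T}$, so that
\[
\frac1n\sum_{i=1}^{n}\log\frac{f(x_i;\theta)}{f(x_i;\theta_0)} \;=\; -\frac12\log\frac{\det C_\theta}{\det C_{\theta_0}} \;-\; \frac12\operatorname{tr}\bigl((C_\theta^{-1}-C_{\theta_0}^{-1})\,S_n\bigr).
\]
Since $\mathbb{E}_{\theta_0}\|X\|^{2}=\operatorname{tr}C_{\theta_0}<\infty$, all integrals below are finite; in particular $\tfrac1n\sum_i\log f(x_i;\theta_0)\to\mathbb{E}_{\theta_0}\log f(X;\theta_0)$ a.s., a finite constant.

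\smallskip\noindent\textbf{Compact core.} For every $\theta\in S$ we have $\theta\not\sim\theta_0$, hence $C_\theta\neq C_{\theta_0}$, hence the nondegenerate Gaussians $f(\cdot;\theta)$ and $f(\cdot;\theta_0)$ are distinct, so $\mathbb{E}_{\theta_0}\bigl[\log\tfrac{f(X;\theta)}{f(X;\theta_0)}\bigr]=-D\bigl(f(\cdot;\theta_0)\,\|\,f(\cdot;\theta)\bigr)<0$ (relative entropy of distinct densities is strictly positive, and it is finite here by the closed-form Gaussian expression). Next, fixing $\theta\in S$ and a small closed ball $\bar B(\theta,\rho)\subset\Theta$, on that ball $\sigma_{\theta'}^{2}$ is bounded below by some $\sigma_{\min}^{2}>0$ and $\|\mathbf{W}_{\theta'}\|$ is bounded, which yields a uniform estimate $|\log f(x;\theta')|\le a+b\|x\|^{2}$ with $a,b$ depending only on the ball; hence $\sup_{\theta'\in B(\theta,\rho)}\log\tfrac{f(X;\theta')}{f(X;\theta_0)}$ is integrable and decreases to $\log\tfrac{f(X;\theta)}{f(X;\theta_0)}$ as $\rho\downarrow0$ by continuity of $\theta'\mapsto f(x;\theta')$. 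By monotone convergence one may choose $\rho_\theta>0$ with $\mathbb{E}_{\theta_0}\bigl[\sup_{\theta'\in B(\theta,\rho_\theta)}\log\tfrac{f(X;\theta')}{f(X;\theta_0)}\bigr]<0$, and then the SLLN gives $\sup_{\theta'\in B(\theta,\rho_\theta)}\prod_{i=1}^{n}\tfrac{f(x_i;\theta')}{f(x_i;\theta_0)}\le\exp\bigl(\sum_{i=1}^{n}\sup_{\theta'\in B(\theta,\rho_\theta)}\log\tfrac{f(x_i;\theta')}{f(x_i;\theta_0)}\bigr)\to0$ a.s. For any compact $K\subset\Theta$, finitely many such balls cover the compact set $S\cap K$, so $\sup_{\theta\in S\cap K}\prod_{i=1}^{n}\tfrac{f(x_i;\theta)}{f(x_i;\theta_0)}\to0$ a.s.

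\smallskip\noindent\textbf{Tail (the main obstacle).} The delicate point is $S\setminus K$ as $K$ exhausts $\Theta$, i.e.\ the behaviour as $\|\mathbf{W}_\theta\|\to\infty$ or $\sigma_\theta^{2}\to0$; Redner's abstract treatment absorbs this into a Wald-type integrability hypothesis, but for PPCA one needs a direct argument because the single-observation supremum $\sup_{\theta}f(x;\theta)$ is in fact $+\infty$, so the averaging must genuinely be exploited. Put $K_r:=\{\theta\in\Theta:\sigma_\theta^{2}\ge 1/r,\ \|\mathbf{W}_\theta\|\le r\}$; since $\operatorname{rank}\mathbf{W}_\theta=q<p$ we have $\mu_{\min}(C_\theta)=\sigma_\theta^{2}$ and $\mu_{\max}(C_\theta)=\sigma_\theta^{2}+\|\mathbf{W}_\theta\|$, so $\theta\notin K_r$ forces $\mu_{\min}(C_\theta)<1/r$ or $\mu_{\max}(C_\theta)>r$. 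By the SLLN $S_n\to C_{\theta_0}\succeq\sigma_0^{2}I_p$, so on a probability-one event there is $N(\omega)$ with $S_n\succeq\tfrac{\sigma_0^{2}}{2}I_p$ for all $n\ge N(\omega)$; then $\operatorname{tr}(C_\theta^{-1}S_n)\ge\tfrac{\sigma_0^{2}}{2}\operatorname{tr}(C_\theta^{-1})$, so, writing $\mu_1,\dots,\mu_p$ for the eigenvalues of $C_\theta$ and $h(\mu):=-\tfrac12\log\mu-\tfrac{\sigma_0^{2}}{4}\mu^{-1}$,
\[
\frac1n\sum_{i=1}^{n}\log f(x_i;\theta)\;\le\;-\frac p2\log(2\pi)\;+\;\sum_{j=1}^{p}h(\mu_j).
\]
The scalar function $h$ is bounded above and $h(\mu)\to-\infty$ both as $\mu\to0^{+}$ and as $\mu\to\infty$, so $\sum_{j}h(\mu_j)\le\gamma(r)$ with $\gamma(r)\to-\infty$, uniformly over $\theta\notin K_r$. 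Combining with $\tfrac1n\sum_i\log f(x_i;\theta_0)\to\mathbb{E}_{\theta_0}\log f(X;\theta_0)$, for $r$ large enough one gets $\tfrac1n\sum_i\log\tfrac{f(x_i;\theta)}{f(x_i;\theta_0)}\le-1$ for all $\theta\in S\setminus K_r$ and all $n$ past some random index, whence $\sup_{\theta\in S\setminus K_r}\prod_{i=1}^{n}\tfrac{f(x_i;\theta)}{f(x_i;\theta_0)}\le e^{-n}\to0$ a.s.

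\smallskip\noindent\textbf{Conclusion.} Fixing such an $r$, $\sup_{\theta\in S}\prod_{i=1}^{n}\tfrac{f(x_i;\theta)}{f(x_i;\theta_0)}=\max\bigl(\sup_{\theta\in S\cap K_r}(\cdot),\ \sup_{\theta\in S\setminus K_r}(\cdot)\bigr)\to0$ almost surely, since each term does and only finitely many SLLN null sets intervene (one per covering ball, one for $S_n\to C_{\theta_0}$, one for $\tfrac1n\sum_i\log f(x_i;\theta_0)$). I expect the tail estimate to be the real obstacle: it is where the non-compactness of $\Theta$ and the degeneracy $\sigma_\theta^{2}\downarrow0$ bite, and it is exactly the step Redner's quotient-space formalism leaves implicit. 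The strict negativity in the compact core is a short Gaussian computation, and the local-uniformity and finite-cover arguments are standard.
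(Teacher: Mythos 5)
Your proposal is correct in substance but takes a genuinely different route from the paper. The paper never proves Theorem \ref{A} directly: it verifies Wald's regularity conditions in the quotient space $(\Theta/C,\bar d)$ (Lemmas \ref{l7.1}--\ref{l7.5}) and then invokes \cite{wald1949note} wholesale. You instead unfold Wald's argument concretely for the Gaussian family: strict negativity of the expected log-ratio (a Kullback--Leibler divergence of distinct nondegenerate Gaussians) on a compact core, handled by shrinking balls, monotone convergence, the SLLN and a finite subcover; and an explicit tail estimate $\frac1n\sum_i\log f(x_i;\theta)\le -\frac p2\log(2\pi)+\sum_j h(\mu_j)$ with $h(\mu)=-\frac12\log\mu-\frac{\sigma_0^2}{4}\mu^{-1}$, valid once $S_n\succeq\frac{\sigma_0^2}{2}I_p$. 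What your version buys is that the two genuinely dangerous regimes --- $\sigma_\theta^2\downarrow0$, where $\sup_\theta f(x;\theta)=+\infty$ so no single-observation envelope exists and the averaging over $n$ must do the work, and $\mu_{\max}(C_\theta)\to\infty$ --- are treated quantitatively and uniformly, rather than being absorbed into abstract integrability hypotheses (these are precisely the degeneracies that the verification of Lemma \ref{l7.2} in the paper leaves implicit). The paper's route is more modular and reusable across models; yours is self-contained and makes the non-compactness of $\Theta$ visible.

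One slip to fix: your $K_r=\{\theta:\sigma_\theta^2\ge 1/r,\ \|\mathbf{W}_\theta\|\le r\}$ is closed but \emph{not} compact, since $\sigma_\theta^2$ is unbounded above on it. Consequently the region $\{\sigma_\theta^2>r,\ \|\mathbf{W}_\theta\|\le r\}$ is covered by neither your finite-ball argument (which needs compactness of $S\cap K_r$) nor your tail case analysis as written. The repair is immediate: add the constraint $\sigma_\theta^2\le r$ to the definition of $K_r$. Then $K_r$ is compact, and for $\theta\notin K_r$ the dichotomy $\mu_{\min}(C_\theta)<1/r$ or $\mu_{\max}(C_\theta)>r$ still holds (the new case $\sigma_\theta^2>r$ forces $\mu_{\max}(C_\theta)\ge\mu_{\min}(C_\theta)>r$), so the same bound $\sum_j h(\mu_j)\le (p-1)\sup_\mu h(\mu)+\max\{\sup_{\mu\le 1/r}h(\mu),\,h(r)\}\to-\infty$ applies. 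With that correction the decomposition is exhaustive and the argument is complete.
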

\begin{theorem}\label{B}
Let $\theta_n(x_1,\ldots,x_n)$ be any measurable function of the observations $x_1,\ldots,x_n$ such that
$$\prod\limits_{i=1}^{n}\dfrac{f(x_i;\theta_n)}{f(x_i;\theta_0)}\geqslant c>0\text{  for all }n,$$
then $\mathbb{P}\left(\lim\limits_{n}[\theta_n]=[\theta_0]\right)=1$, that is $\theta_n$ converges to $\theta_0$ in $\theta/C$ almost surely, and therefore in probability.
\end{theorem}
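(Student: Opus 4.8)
The plan is to deduce Theorem \ref{B} from Theorem \ref{A} by a standard covering/countable-exhaustion argument, transported into the quotient metric space $\Theta/C$ via Lemma \ref{l1}. First I would fix an arbitrary $\varepsilon > 0$ and consider the closed set
$$S_\varepsilon := \{\theta \in \Theta : \bar d([\theta],[\theta_0]) \geqslant \varepsilon\},$$
where $\bar d$ is the quotient metric from Lemma \ref{l1} (using that $C$ is closed in the complete metric space $\Theta$, or its closure/completion as needed). The key observation is that $S_\varepsilon$ is a closed subset of $\Theta$ that does not intersect $C$: indeed $\theta \in C$ iff $[\theta] = [\theta_0]$ iff $\bar d([\theta],[\theta_0]) = 0$, so no point of $C$ lies in $S_\varepsilon$. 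Hence Theorem \ref{A} applies verbatim to $S = S_\varepsilon$, giving
$$\mathbb{P}\Big(\lim_n \sup_{\theta \in S_\varepsilon} \prod_{i=1}^n \frac{f(x_i;\theta)}{f(x_i;\theta_0)} = 0\Big) = 1.$$

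Next I would combine this with the hypothesis on $\theta_n$. On the almost sure event where the supremum over $S_\varepsilon$ of the likelihood ratio tends to $0$, there exists (random) $N$ such that for all $n \geqslant N$ we have $\sup_{\theta \in S_\varepsilon} \prod_{i=1}^n f(x_i;\theta)/f(x_i;\theta_0) < c$. But by assumption $\prod_{i=1}^n f(x_i;\theta_n)/f(x_i;\theta_0) \geqslant c$ for every $n$; therefore for $n \geqslant N$ the point $\theta_n$ cannot lie in $S_\varepsilon$, i.e. $\bar d([\theta_n],[\theta_0]) < \varepsilon$. This shows that on this event $\limsup_n \bar d([\theta_n],[\theta_0]) \leqslant \varepsilon$. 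Since the event has probability one for each fixed $\varepsilon$, I would then intersect over a countable sequence $\varepsilon = 1/k$, $k \in \mathbb{N}$ — a countable intersection of probability-one events still has probability one — to conclude $\mathbb{P}(\lim_n \bar d([\theta_n],[\theta_0]) = 0) = 1$, which is exactly the assertion that $[\theta_n] \to [\theta_0]$ in $\Theta/C$ almost surely. Almost sure convergence then implies convergence in probability, as claimed.

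One technical point I would want to handle carefully is the measurability of $\sup_{\theta \in S_\varepsilon} \prod_{i=1}^n f(x_i;\theta)/f(x_i;\theta_0)$ and of the event in Theorem \ref{A}; this is the kind of subtlety that is routinely glossed over but genuinely needs the set $S_\varepsilon$ to be, say, separable (it is, being a subset of Euclidean space) so that the supremum can be taken over a countable dense subset without changing its value by continuity of $\theta \mapsto f(x;\theta)$. A second point is the compatibility of the two notions of convergence on $\Theta/C$: since we are using Lemma \ref{l1}, the quotient metric topology is what governs $\bar d$-convergence, and for the explicit $C$ at hand the metric $\bar d([\theta],[\theta_0]) = \min(d(\theta,\theta_0), d(\theta,C) + d(\theta_0,C)) = \min(d(\theta,\theta_0), d(\theta,C))$ (as $\theta_0 \in C$) makes the identification of $S_\varepsilon$ transparent. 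I expect the main obstacle to be purely bookkeeping: ensuring that "$S_\varepsilon$ does not intersect $C$" is stated with the right space (the completion $\Theta_\approx$ versus $\Theta$ itself) and that Theorem \ref{A}, which is phrased for closed subsets of $\Theta$, is applied to a set that is genuinely closed in $\Theta$ — which it is, as the preimage of a closed half-line under the continuous map $\theta \mapsto \bar d([\theta],[\theta_0])$.
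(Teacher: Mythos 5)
Your proposal is correct and follows essentially the route the paper intends: it deduces Theorem \ref{B} from Theorem \ref{A} by applying it to the closed sets $S_\varepsilon=\{\theta:\bar d([\theta],[\theta_0])\geqslant\varepsilon\}$ (which are disjoint from $C$ and closed by continuity of $\theta\mapsto\bar d([\theta],[\theta_0])$), noting that the lower bound $c>0$ forces $\theta_n\notin S_\varepsilon$ eventually, and intersecting over $\varepsilon=1/k$. This is precisely the almost-sure analogue of the paper's proof of Theorem \ref{m1}, which is what the authors mean when they say the proof of Theorem \ref{B} is ``analogous.''
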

The above results have elegant geometric interpretations. In particular, Theorem \ref{A} implies that if $U_C$ is any open neighbourhood containing $C$ then almost surely all but finitely many terms of the sequence $\{\theta_n\}$ land inside $U_C$. As an immediate consequence of Theorem \ref{B} when $c=1$, we obtain:
\begin{cor}[Strong consistency of the MLE in the quotient space]\label{cor1}
Let $\theta_n$ denote the sequence of maximum likelihood estimates of the PPCA model. We then have that
$$\mathbb{P}\left(\lim\limits_{n}[\theta_n]=[\theta_0]\right)=1$$ that is maximum likelihood estimates converge to the true parameter $\theta_0$ almost surely, and therefore in probability in the quotient space $\Theta/C$.
\end{cor}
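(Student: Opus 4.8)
The maximum likelihood estimate $\theta_n = (\widehat{\textbf{W}}, \widehat{\sigma}^2)$ is by definition the maximizer of the likelihood $\prod_{i=1}^n f(x_i;\theta)$ over $\theta \in \Theta$. First I would observe that the true parameter $\theta_0 \in \Theta$ is itself a candidate, so that by the maximizing property,
\[
\prod_{i=1}^{n}\frac{f(x_i;\theta_n)}{f(x_i;\theta_0)} \;=\; \frac{\prod_{i=1}^n f(x_i;\theta_n)}{\prod_{i=1}^n f(x_i;\theta_0)} \;\geqslant\; \frac{\prod_{i=1}^n f(x_i;\theta_0)}{\prod_{i=1}^n f(x_i;\theta_0)} \;=\; 1.
\]
Hence the sequence $\theta_n$ satisfies the hypothesis of Theorem \ref{B} with the constant $c = 1 > 0$. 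One should also note that $\theta_n$ is a measurable function of $x_1,\ldots,x_n$: this follows since $\widehat{\textbf{W}}$ and $\widehat{\sigma}^2$ are built from the eigendecomposition of the sample covariance matrix $S_x$, and the eigenvalues (and a measurable selection of eigenvectors) depend measurably on $S_x$, which in turn is a continuous function of the data.

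\emph{Applying Theorem \ref{B}} then yields directly $\mathbb{P}\left(\lim_n [\theta_n] = [\theta_0]\right) = 1$, i.e.\ the ML estimates converge almost surely to $\theta_0$ in the quotient space $\Theta/C$. Since almost sure convergence implies convergence in probability, the statement follows. The only subtlety worth spelling out is that $\widehat{\textbf{W}}$ is not unique---it is determined only up to a right rotation $\textbf{R}$---but this is exactly the ambiguity that the quotient $\Theta/C$ is designed to absorb: any two such choices lie in the same equivalence class, so the statement $\lim_n [\theta_n] = [\theta_0]$ is well posed regardless of which representative is picked.

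\textbf{The main obstacle is essentially nonexistent here}, since the corollary is a one-line consequence of Theorem \ref{B}; the real work lives upstream in the proofs of Theorems \ref{A} and \ref{B}. If anything deserves care, it is confirming that the likelihood ratio is genuinely $\geqslant 1$ rather than merely $\geqslant c$ for some unspecified $c$---this requires that the supremum of the likelihood over $\Theta$ is attained (or at least that the MLE as defined by Tipping and Bishop's closed form does attain a value no smaller than the likelihood at $\theta_0$), which holds because the closed-form maximizer is an exact global maximizer of the Gaussian likelihood over the PPCA family. With $c=1$ in hand, Theorem \ref{B} closes the argument.
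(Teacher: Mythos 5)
Your proposal is correct and matches the paper's own argument: the corollary is obtained by noting that the MLE maximizes the likelihood so the ratio $\prod_{i=1}^{n}f(x_i;\theta_n)/f(x_i;\theta_0)\geqslant 1$, and then applying Theorem \ref{B} with $c=1$. The additional remarks on measurability and on the rotational non-uniqueness being absorbed by the quotient are sensible but not part of the paper's (essentially one-line) deduction.
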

Additionally, using the above corollary we obtain that the covariance for the PPCA process can be consistently (strong) estimated under a compactness assumption.
\begin{theorem}\label{cor2}
Let $\Theta_0$ be a compact subset of $\Theta$ containing the point $\theta_0=(\textbf{W}_0,\sigma_0^2)$. Let $\theta_n=(\widehat{\textbf{W}},\widehat{\sigma}^2)$ denote the sequence of maximum likelihood estimates of the PPCA model. Then we have
$$\mathbb{P}\left(\widehat{\textbf{W}}\widehat{\textbf{W}}^{T}+\widehat{\sigma}^2I_p\to \textbf{W}_{0}\textbf{W}_{0}^T+\sigma_{0}^2I_p\right)=1.$$
\end{theorem}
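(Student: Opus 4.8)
The plan is to transport the quotient–space convergence provided by Corollary \ref{cor1} along the natural \emph{covariance map} and thereby obtain convergence in the space of symmetric $p\times p$ real matrices. Define $g:\Theta\to\mathrm{Sym}(p,\mathbb{R})$ by $g(\theta)=\textbf{W}_\theta\textbf{W}_\theta^T+\sigma_\theta^2 I_p$. This map is continuous, since its entries are polynomials in the entries of $\theta$, and it is constant on every $\sim$-equivalence class: on the singleton classes trivially, and on $C$ because $C$ was defined precisely as $\{\theta\in\Theta:g(\theta)=g(\theta_0)\}$. Hence Theorem \ref{thh}, applied with $Y=\mathrm{Sym}(p,\mathbb{R})$, yields a continuous map $\tilde g:\Theta/C\to\mathrm{Sym}(p,\mathbb{R})$ with $g=\tilde g\circ\pi$, where $\pi:\Theta\to\Theta/C$ is the canonical projection.

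Granting this, the argument is immediate. By Corollary \ref{cor1} the maximum likelihood estimates satisfy $[\theta_n]\to[\theta_0]$ in $\Theta/C$ almost surely; composing with the continuous map $\tilde g$ and recalling $g(\theta_n)=\widehat{\textbf{W}}\widehat{\textbf{W}}^T+\widehat{\sigma}^2 I_p$ and $g(\theta_0)=\textbf{W}_0\textbf{W}_0^T+\sigma_0^2 I_p$ gives, almost surely,
\[
\widehat{\textbf{W}}\widehat{\textbf{W}}^T+\widehat{\sigma}^2 I_p=\tilde g([\theta_n])\longrightarrow\tilde g([\theta_0])=\textbf{W}_0\textbf{W}_0^T+\sigma_0^2 I_p,
\]
which is the claim.

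The step that requires genuine care — and the only place the compactness hypothesis is used — is reconciling the two topologies on $\Theta/C$. Corollary \ref{cor1} delivers convergence in the quotient topology, whereas the concrete object one computes with is the metric space $(\Theta/C,\bar d)$ of Lemma \ref{l1}, whose topology is a priori only coarser, so the quotient-continuity of $\tilde g$ does not by itself transfer. I would close this gap by restricting to the compact set $\Theta_0$: Lemma \ref{l1} gives $\bar d([\theta],[\theta_0])=\min\big(d(\theta,\theta_0),\,d(\theta,C)\big)$ because $\theta_0\in C$, so $d(\theta_0,C)=0$; on $\Theta_0$ the map $g$ is uniformly continuous and $C\cap\Theta_0$ is a nonempty compact set on which $g\equiv\textbf{W}_0\textbf{W}_0^T+\sigma_0^2 I_p$. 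Then, if $\bar d([\theta_n],[\theta_0])\to 0$ with $\theta_n\in\Theta_0$, I would split into the two cases realizing the minimum — $d(\theta_n,\theta_0)\to 0$, handled by continuity of $g$ at $\theta_0$, and $d(\theta_n,C)\to 0$, handled by taking near-minimizers $c_n\in C$ lying in a fixed bounded set and using uniform continuity of $g$ together with $g(c_n)=g(\theta_0)$ — to conclude $g(\theta_n)\to g(\theta_0)$. One also has to check that the MLE sequence eventually enters $\Theta_0$; this follows from Corollary \ref{cor1} once $\Theta_0$ is taken to be a compact neighbourhood of $\theta_0$, or else one simply maximizes the likelihood over the constrained space $\Theta_0$. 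I expect this topology-reconciliation — equivalently, upgrading continuity of $\tilde g$ from the quotient topology to the metric $\bar d$ on the relevant compact region — to be the main obstacle; everything else is bookkeeping. (As a sanity check, one could alternatively bypass the quotient machinery here entirely by invoking the strong law of large numbers for the sample covariance $S_x$ together with the explicit closed form of $\widehat{\textbf{W}}$ and $\widehat{\sigma}^2$.)
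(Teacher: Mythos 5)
Your proposal is correct and follows essentially the same route as the paper: push the quotient-space convergence of Corollary \ref{cor1} through a lift of the covariance map $\theta\mapsto\textbf{W}_\theta\textbf{W}_\theta^T+\sigma_\theta^2 I_p$. The one substantive difference is how you certify that the lift is continuous with respect to the quotient metric $\bar d$ (rather than merely with respect to the quotient topology): the paper restricts $\psi$ to the compact set $\Theta_0$, notes it is Lipschitz there, and invokes Lemma \ref{thh1} (Weaver's Lipschitz-lifting propositions) to get a $\bar d$-Lipschitz lift; you instead prove $\bar d$-continuity by hand from the explicit formula $\bar d([\theta],[\theta_0])=\min\bigl(d(\theta,\theta_0),\,d(\theta,C)\bigr)$ of Lemma \ref{l1}, splitting on which term attains the minimum and using uniform continuity of $g$ on a bounded set together with $g\equiv g(\theta_0)$ on $C$. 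Your hands-on argument is sound and arguably more self-contained than the appeal to \cite{weaver2018lipschitz}. Two small cautions. First, your opening paragraph, taken alone, is exactly the fallacy the paper's appendix counterexample is designed to expose: Theorem \ref{thh} gives continuity of $\tilde g$ only for the quotient topology, while Corollary \ref{cor1} (as proved via Theorem \ref{m1}) gives convergence in the \emph{coarser} metric $\bar d$, so quotient-continuity does not transfer; you do catch this, though you state the mismatch backwards (it is the convergence that lives in the coarser topology and the continuity in the finer one, not the other way around). Second, your observation that one must check the MLE sequence eventually lies in $\Theta_0$ is a legitimate point that the paper glosses over by simply restricting $\psi$ to $\Theta_0$; your suggested remedies (take $\Theta_0$ a compact neighbourhood of $\theta_0$ and use Corollary \ref{cor1}, or maximize over the constrained space) are both reasonable ways to make this rigorous.
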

Now we talk about possible generalizations above results in light of \cite{wolfowitz}, where the observations $\{X_i\}_{i=1}^n$ are assumed to be identically distributed but \textit{need not be independent}. As noted in Wolfwowitz's work that \cite{wald1949note} proves strong consistency (i.e. almost sure convergence) and his proof techniques can be extended for dependent random variables $\{X_i\}_{i=1}^n$ as long as the sequence $\{X_i\}$ satisfies the strong law of large numbers. In the spirit of Wald's work, \cite{wolfowitz} proves the consistency of ML estimates (i.e. convergence in probability) only under the assumption that the sequence $\{X_i\}$ satisfies weak law of large numbers. The assumption that the sequence $\{X_i\}$ satisfies the weak law of large numbers may seem technical but it offers a great deal of flexibility in real life applications due to the following result of Bernstein \citep{bern}.
\begin{theorem}[Bernstein]
 Let $\{X_i\}$ be a sequence of centered random variables. If there exists a constant $\kappa>0$ such that for every $i\in\mathbb{N}$ we have $\text{Var}(X_i)\leqslant \kappa$ and if the following condition is true:
 $$\lim\limits_{|i-j|\to \infty}\text{Cov}(X_i,X_j)=0,$$
 then weak law of large numbers holds.
\end{theorem}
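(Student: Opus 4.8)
The plan is to reduce the weak law of large numbers to an $L^2$ estimate on the partial sums and then invoke Chebyshev's inequality. Write $S_n=\sum_{i=1}^n X_i$; since each $X_i$ is centered, $S_n$ is centered as well, and what we must show is that $S_n/n\to 0$ in probability. By Chebyshev's inequality, for every $\delta>0$ one has $\mathbb{P}(|S_n/n|>\delta)\leqslant \text{Var}(S_n)/(n^2\delta^2)$, so it suffices to prove that $\text{Var}(S_n)/n^2\to 0$.

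To control $\text{Var}(S_n)=\sum_{i=1}^n\sum_{j=1}^n\text{Cov}(X_i,X_j)$, I would split the double sum according to whether $|i-j|$ is small or large. Fix $\varepsilon>0$. By the hypothesis $\lim_{|i-j|\to\infty}\text{Cov}(X_i,X_j)=0$, there is an integer $N=N(\varepsilon)$ such that $|\text{Cov}(X_i,X_j)|\leqslant\varepsilon$ whenever $|i-j|\geqslant N$. For the remaining near-diagonal terms with $|i-j|<N$, the Cauchy--Schwarz inequality together with the uniform variance bound gives $|\text{Cov}(X_i,X_j)|\leqslant\sqrt{\text{Var}(X_i)\,\text{Var}(X_j)}\leqslant\kappa$. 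There are at most $(2N-1)n\leqslant 2Nn$ index pairs $(i,j)$ with $1\leqslant i,j\leqslant n$ and $|i-j|<N$, and at most $n^2$ pairs in total, so
$$\text{Var}(S_n)\leqslant 2Nn\kappa+\varepsilon n^2,\qquad\text{hence}\qquad \frac{\text{Var}(S_n)}{n^2}\leqslant\frac{2N\kappa}{n}+\varepsilon.$$
Letting $n\to\infty$ gives $\limsup_n \text{Var}(S_n)/n^2\leqslant\varepsilon$, and since $\varepsilon>0$ was arbitrary, $\text{Var}(S_n)/n^2\to 0$. Together with the Chebyshev bound above this yields $S_n/n\to 0$ in probability, which is exactly the assertion that the weak law of large numbers holds for $\{X_i\}$.

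There is essentially no deep obstacle here; the one point requiring a little care is the bookkeeping in the near-diagonal count --- making sure the number of index pairs with $|i-j|<N$ is genuinely $O(Nn)$ and not $O(n^2)$ --- together with the use of Cauchy--Schwarz to bound the near-diagonal covariances uniformly by $\kappa$, which is precisely what keeps that block from dominating. If a quantitative rate were desired, the inequality $\text{Var}(S_n)/n^2\leqslant 2N(\varepsilon)\kappa/n+\varepsilon$ already makes the trade-off explicit and one could optimise over $\varepsilon$; for the present application only the qualitative convergence in probability is needed, so I would stop there.
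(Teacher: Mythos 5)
Your proof is correct and complete. Note that the paper does not actually prove this statement --- it is quoted as a classical result of Bernstein with a citation --- so there is no in-paper argument to compare against; the Chebyshev-plus-covariance-splitting argument you give (near-diagonal block of size $O(Nn)$ controlled by Cauchy--Schwarz and the uniform variance bound $\kappa$, far-off-diagonal block controlled by the decay hypothesis) is the standard proof and fills the gap self-containedly. The only point worth stating explicitly is the reading of the hypothesis $\lim_{|i-j|\to\infty}\mathrm{Cov}(X_i,X_j)=0$ as a uniform statement over all pairs with gap at least $N(\varepsilon)$, which is how you use it and is the intended interpretation.
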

We may note that the above result could be extremely relevant in real life applications as intuitively it accommodates all statistical models which allows local dependence among observations, in the sense that $X_i$ can significantly correlate with $X_j$ as long as $j$ remains sufficiently close to $i$ (i.e., $|i-j|$ is small), but the correlation gradually vanishes as $|i-j|$ becomes large. Temporal stochastic processes or time series data with appropriate assumptions could be tailored into the category of those statistical models where Bernstein's result applies. We therefore state our next set of results on the consistency of maximum likelihood estimates of the PPCA model but only under a weak law assumption. The following result is an immediate consequence of \cite{wolfowitz}.
\begin{theorem}\label{m0}
Let the data points $\{x_i\}_{i=1}^n$ be realizations of identically distributed random variables $\{X_i\}_{i=1}^n$ whose density is given by $f(x;\theta_0)$. Let the sequence of random variables $\{X_i\}$ satisfy the weak law of large numbers. Given $\eta>0$ and a closed subset $S$ of $\Theta$ not intersecting $C$, there exists a quantity $h(S)\in (0,1)$ which depends only on the closed set $S$ and an integer $N(\eta,S)$ such that 
$$\mathbb{P}\left(\sup\limits_{\theta\in S}\prod\limits_{i=1}^{n}\dfrac{f(x_i;\theta)}{f(x_i;\theta_0)}>h^n(S)\right)<\eta\text{ for every }n\geqslant N(\eta,S).$$
\end{theorem}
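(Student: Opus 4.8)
The plan is to read Theorem~\ref{m0} as the weak-law counterpart of Theorem~\ref{A} and to replay the Wald--Redner argument inside the quotient space $\Theta/C$, replacing the strong law of large numbers by the weak law; it is precisely this substitution that converts the qualitative statement ``$\to 0$ almost surely'' into the quantitative ``$<\eta$ for $n\geqslant N(\eta,S)$''. For $\theta\in\Theta$ and $\rho>0$ I would work with the truncated local supremum
$$\phi(x;\theta,\rho):=\max\Bigl(1,\,\sup_{\theta'\in B(\theta,\rho)\cap\Theta}\frac{f(x;\theta')}{f(x;\theta_0)}\Bigr),$$
the truncation at $1$ guaranteeing that $\log\phi(\cdot;\theta,\rho)$ has an integrable positive part, and measurability together with the replacement of the ball-supremum by a countable one following from continuity of $\theta'\mapsto f(x;\theta')$ and separability of $\Theta$. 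The first step is the single-parameter inequality: for every $\theta\in S$ one has $[\theta]\neq[\theta_0]$ in $\Theta/C$, hence $f(\cdot;\theta)\neq f(\cdot;\theta_0)$ on a set of positive Lebesgue measure, so the strict concavity of $\log$ and Jensen's inequality give $\mathbb{E}_{\theta_0}\log\bigl(f(X;\theta)/f(X;\theta_0)\bigr)<0$ (possibly $-\infty$, the expectation being well defined since the PPCA densities are nondegenerate Gaussians); letting $\rho\downarrow 0$ and invoking monotone convergence one fixes $\rho_\theta>0$ with $\mathbb{E}_{\theta_0}\log\phi(X;\theta,\rho_\theta)=:-a_\theta<0$.

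Since $S$ need only be closed, I would next split $S=\bigl(S\cap\overline{B}(0,R)\bigr)\cup\bigl(S\setminus B(0,R)\bigr)$ and dispose of the unbounded piece by Gaussian estimates. As $\|\textbf{W}_\theta\|\to\infty$ or $\sigma_\theta^2\to\infty$ the normalizing determinant $\det(\textbf{W}_\theta\textbf{W}_\theta^T+\sigma_\theta^2 I_p)$ diverges, whereas as $\sigma_\theta^2\to 0$ (the remaining unbounded direction within $\Theta$, which stays away from $C$ because $\sigma_\theta^2=\sigma_0^2$ on $C$) the quadratic form $x^{T}(\textbf{W}_\theta\textbf{W}_\theta^T+\sigma_\theta^2 I_p)^{-1}x$ diverges for $f(\cdot;\theta_0)$-almost every $x$; in both regimes $\log\bigl(f(x;\theta)/f(x;\theta_0)\bigr)\to-\infty$, and I would use this to obtain, for $R$ large, a bound $\mathbb{E}_{\theta_0}\log\sup_{\theta\in S\setminus B(0,R)}\bigl(f(X;\theta)/f(X;\theta_0)\bigr)<-a_\infty<0$. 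This is the quotient-space analogue of Wald's growth condition and should overlap with estimates already required in the proof of Theorem~\ref{A}.

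Now $S\cap\overline{B}(0,R)$ is compact, hence covered by finitely many balls $B(\theta_1,\rho_{\theta_1}),\dots,B(\theta_m,\rho_{\theta_m})$ from the first step. On the $k$-th ball one has $\sup_{\theta\in B(\theta_k,\rho_{\theta_k})}\prod_{i=1}^n\frac{f(x_i;\theta)}{f(x_i;\theta_0)}\leqslant\exp\!\bigl(n\cdot\frac1n\sum_{i=1}^n\log\phi(x_i;\theta_k,\rho_{\theta_k})\bigr)$, and the weak law of large numbers --- applied to the identically distributed sequence $\{\log\phi(X_i;\theta_k,\rho_{\theta_k})\}_{i\geqslant 1}$, whose validity in the dependent case is exactly what the covariance-decay hypothesis behind Bernstein's theorem secures --- gives an $N_k$ with $\mathbb{P}\bigl(\frac1n\sum_{i=1}^n\log\phi(X_i;\theta_k,\rho_{\theta_k})>-\tfrac{a_{\theta_k}}{2}\bigr)<\eta/(m+1)$ for $n\geqslant N_k$, and similarly a bound $<\eta/(m+1)$ for the exceptional event coming from $S\setminus B(0,R)$. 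Taking $h(S):=\max\bigl(e^{-a_{\theta_1}/2},\dots,e^{-a_{\theta_m}/2},e^{-a_\infty/2}\bigr)\in(0,1)$ and $N(\eta,S):=\max_k N_k$ and applying a union bound over these $m+1$ events yields $\mathbb{P}\bigl(\sup_{\theta\in S}\prod_{i=1}^n f(x_i;\theta)/f(x_i;\theta_0)>h^n(S)\bigr)<\eta$ for all $n\geqslant N(\eta,S)$, which is the assertion.

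I expect the principal obstacle to be the uniform control in the second step --- pinning down a genuinely uniform negative bound on the log-likelihood ratio over the \emph{unbounded} closed set $S$, and in particular near the $\sigma_\theta^2\to 0$ face of $\Theta$, where the exploding normalizing constant and the exploding quadratic form pull against each other and must be weighed for $f(\cdot;\theta_0)$-typical $x$. A second, less classical subtlety is verifying that the weak law really transfers from $\{X_i\}$ to the nonlinear functionals $\log\phi(X_i;\theta,\rho)$, which is automatic under independence but is exactly the reason Bernstein's condition is cited just before the theorem; the measurability of the local suprema and the passage to a countable index set are routine and I would only flag them.
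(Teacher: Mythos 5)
Your proposal takes a genuinely different route from the paper. The paper never replays the Wald--Wolfowitz machinery for Theorem~\ref{m0}: it verifies Wald's five regularity conditions in the quotient space $(\Theta/C,\bar d)$ (Lemmas~\ref{l7.1}--\ref{l7.5}) and then obtains Theorem~\ref{m0} as a black-box consequence of Wolfowitz's theorem, exactly as Theorem~\ref{A} is obtained from Wald's. You instead reconstruct the classical argument from scratch --- Jensen for the pointwise Kullback--Leibler inequality, shrinking local suprema via monotone convergence, a compactness cover, a separate estimate at infinity, and a union bound with the weak law. This is a legitimate and more self-contained alternative, and it makes explicit the content that the paper's citation hides; your closing remark that the WLLN must be assumed for the functionals $\log\phi(X_i;\theta,\rho)$ rather than for $X_i$ themselves is exactly the reading Wolfowitz's hypotheses require, and it is a point the paper leaves implicit.

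There are, however, two concrete problems. First, as written your central inequality is false: you define $\phi(x;\theta,\rho)=\max\bigl(1,\sup_{\theta'\in B(\theta,\rho)}f(x;\theta')/f(x;\theta_0)\bigr)$, so $\phi\geqslant 1$ and $\log\phi\geqslant 0$ pointwise, which makes $\mathbb{E}_{\theta_0}\log\phi(X;\theta,\rho_\theta)=-a_\theta<0$ impossible. The truncation at $1$ must play the role it plays in Wald's proof --- an integrable dominating function for the \emph{positive part}, justifying the monotone-convergence passage $\rho\downarrow 0$ --- while the quantity driven below zero by Jensen is the untruncated $\mathbb{E}_{\theta_0}\log\sup_{\theta'\in B(\theta,\rho)}f(X;\theta')/f(X;\theta_0)$; the subsequent exponential bound and the WLLN step must then be run on that untruncated local supremum. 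This is a standard fix, but the argument as stated breaks at its key step. Second, the compactness step is not automatic: $S\cap\overline{B}(0,R)$ is closed and bounded \emph{in} $\Theta=\mathbb{R}^{p\times q}\times\mathbb{R}_+$, but $\Theta$ is not closed in $\mathbb{R}^{pq+1}$ (the face $\sigma^2=0$ is excluded), so a closed bounded subset of $\Theta$ need not be compact and the finite subcover may fail precisely along sequences with $\sigma_{\theta}^2\downarrow 0$. You flag the $\sigma^2\to 0$ degeneration, but you locate it in the unbounded piece $S\setminus B(0,R)$, where it does not live; it sits inside the bounded piece and defeats the covering argument unless handled separately (e.g.\ by a dedicated estimate showing the likelihood ratio tends to $0$ for $P_{\theta_0}$-a.e.\ $x$ as $\sigma_\theta^2\to 0$, since $x$ avoids the rank-$q$ column space of $\textbf{W}_\theta$ almost surely, so the quadratic form beats the $\log\det$ term). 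The paper absorbs this issue into Lemma~\ref{l7.1}; your reconstruction exposes it but does not close it.
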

\begin{theorem}\label{m1}
Let the data points $\{x_i\}_{i=1}^n$ be realizations of identically distributed random variables $\{X_i\}_{i=1}^n$ whose density is given by $f(x;\theta_0)$. Let the sequence of random variables $\{X_i\}$ satisfy the weak law of large numbers. Let $\theta_n(x_1,\ldots,x_n)$ be any measurable function of the observations $x_1,\ldots,x_n$ such that
$$\prod\limits_{i=1}^{n}\dfrac{f(x_i;\theta_n)}{f(x_i;\theta_0)}\geqslant c>0\text{  for all }n,$$
then $[\theta_n] \overset{\mathbb{P}}{\to} [\theta_0]$,
 that is $\theta_n$ converges to $\theta_0$ in $\theta/C$ in probability.
\end{theorem}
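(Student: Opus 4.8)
\emph{Proof proposal for Theorem \ref{m1}.} The plan is to deduce Theorem \ref{m1} from Theorem \ref{m0} in exactly the way Theorem \ref{B} follows from Theorem \ref{A}, the only change being that the almost-sure uniform decay of the likelihood ratio in Theorem \ref{A} is now replaced by the ``in probability'' uniform bound of Theorem \ref{m0}. All of the probabilistic content, and hence the sole place where the weak law of large numbers enters, is already packaged inside Theorem \ref{m0} (itself an immediate consequence of \cite{wolfowitz}); what remains is a short, essentially deterministic argument converting the likelihood-ratio lower bound on $\theta_n$ into closeness of $[\theta_n]$ to $[\theta_0]$ in the quotient metric $\bar d$ on $\Theta/C$.

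First I would fix the metric on $\Theta/C$ using Lemma \ref{l1}. Since $\theta_0\in C$ we have $d(\theta_0,C)=0$, and since $d(\theta,C)\le d(\theta,\theta_0)$ for every $\theta\in\Theta$, the formula of Lemma \ref{l1} collapses to the clean identity $\bar d([\theta],[\theta_0])=d(\theta,C)$. Consequently, for each $\epsilon>0$ the set $S_\epsilon:=\{\theta\in\Theta:\bar d([\theta],[\theta_0])\ge\epsilon\}=\{\theta\in\Theta:d(\theta,C)\ge\epsilon\}$ is closed in $\Theta$ and disjoint from $C$, so it is an admissible input to Theorem \ref{m0}. Next I would bound $\mathbb{P}\big(\bar d([\theta_n],[\theta_0])\ge\epsilon\big)=\mathbb{P}(\theta_n\in S_\epsilon)$: on the event $\{\theta_n\in S_\epsilon\}$ one trivially has $\prod_{i=1}^n f(x_i;\theta_n)/f(x_i;\theta_0)\le\sup_{\theta\in S_\epsilon}\prod_{i=1}^n f(x_i;\theta)/f(x_i;\theta_0)$, while by hypothesis the left-hand side is $\ge c$ almost surely, so $\mathbb{P}(\theta_n\in S_\epsilon)\le\mathbb{P}\big(\sup_{\theta\in S_\epsilon}\prod_{i=1}^n f(x_i;\theta)/f(x_i;\theta_0)\ge c\big)$. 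Now apply Theorem \ref{m0} to $S=S_\epsilon$: it yields $h(S_\epsilon)\in(0,1)$ and, for any $\eta>0$, an integer $N(\eta,S_\epsilon)$ with $\mathbb{P}\big(\sup_{\theta\in S_\epsilon}\prod_{i=1}^n f(x_i;\theta)/f(x_i;\theta_0)>h^n(S_\epsilon)\big)<\eta$ for $n\ge N(\eta,S_\epsilon)$. Because $h(S_\epsilon)<1$, pick $N'$ with $h^n(S_\epsilon)<c$ for all $n\ge N'$; then for $n\ge\max(N',N(\eta,S_\epsilon))$ the event $\{\sup\ge c\}$ is contained in $\{\sup>h^n(S_\epsilon)\}$, whence $\mathbb{P}(\theta_n\in S_\epsilon)<\eta$. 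As $\eta>0$ was arbitrary, $\mathbb{P}\big(\bar d([\theta_n],[\theta_0])\ge\epsilon\big)\to 0$, and as $\epsilon>0$ was arbitrary, $[\theta_n]\overset{\mathbb{P}}{\to}[\theta_0]$ in $(\Theta/C,\bar d)$.

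The steps above are short; the genuine obstacle is entirely upstream, in Theorem \ref{m0} itself, namely in verifying that Wolfowitz's replacement of Wald's almost-sure arguments by weak-law arguments goes through for the Gaussian PPCA densities after passing to the quotient $\Theta/C$ — in particular that the relevant integrability and upper-semicontinuity properties of $\log f(x;\theta)$ hold uniformly on closed sets $S$ disjoint from $C$, so that the constant $h(S)<1$ can actually be extracted. One should additionally confirm measurability of $\sup_{\theta\in S}\prod_{i=1}^n f(x_i;\theta)/f(x_i;\theta_0)$ (continuity of $\theta\mapsto f(x;\theta)$ together with separability of $S$ suffices), and be explicit that convergence ``in $\Theta/C$'' is understood with respect to the metric topology of $\bar d$, which by Lemma \ref{l1} is precisely the statement established here.
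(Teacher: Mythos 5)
Your proposal is correct and follows essentially the same route as the paper's own proof: define the closed set $S_\epsilon=\{\theta:\bar d([\theta],[\theta_0])\geqslant\epsilon\}$ disjoint from $C$, bound $\mathbb{P}(\theta_n\in S_\epsilon)$ by $\mathbb{P}\bigl(\sup_{\theta\in S_\epsilon}\prod_{i=1}^n f(x_i;\theta)/f(x_i;\theta_0)\geqslant c\bigr)$, and invoke Theorem \ref{m0} together with $h^n(S_\epsilon)<c$ for large $n$. Your explicit use of Lemma \ref{l1} to identify $\bar d([\theta],[\theta_0])$ with $d(\theta,C)$ and thereby verify closedness and disjointness from $C$ is a small added justification the paper leaves implicit, but it does not change the argument.
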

Furthermore we also have the two following results in spirit of corollary \ref{cor1} and Theorem \ref{cor2}.
\begin{cor}[Consistency of the MLE in the quotient space]\label{m2}
Let the data points $\{x_i\}_{i=1}^n$ be realizations of identically distributed random variables $\{X_i\}_{i=1}^n$ whose density is given by $f(x;\theta_0)$. Let the sequence of random variables $\{X_i\}$ satisfy the weak law of large numbers. Let $\theta_n$ denote the sequence of maximum likelihood estimates of the PPCA model. We then have that
$$[\theta_n] \overset{\mathbb{P}}{\to} [\theta_0],$$ that is maximum likelihood estimates converge to the true parameter $\theta_0$ in probability in the quotient space $\Theta/C$.
\end{cor}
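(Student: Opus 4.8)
The plan is to read off Corollary~\ref{m2} as the special case $c=1$ of Theorem~\ref{m1}, so essentially all of the work is to check that the maximum likelihood estimator of the PPCA model is a legitimate choice for the measurable function $\theta_n$ in that theorem. First I would invoke the Tipping--Bishop closed form recalled in Section~3: once $n$ is large enough that the sample covariance $S_x = 1/n\sum_{i=1}^n x_i x_i^T$ has $q$ strictly dominant eigenvalues exceeding $\widehat\sigma^2 = \frac{1}{p-q}\sum_{j=q+1}^p \delta_j$, the set of ML solutions is nonempty and is given explicitly in terms of the spectral decomposition of $S_x$; moreover this closed form is a \emph{global} maximizer of the likelihood $L_n(\theta) = \prod_{i=1}^n f(x_i;\theta)$ over $\Theta$. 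Since $S_x$ is a measurable function of $(x_1,\dots,x_n)$, and a measurable selection of its top-$q$ eigenprojections (hence of $\textbf{U}$, and hence of $\widehat{\textbf{W}}$) always exists, the map $(x_1,\dots,x_n)\mapsto\theta_n$ can be taken measurable; on the null event where the relevant eigenvalues coincide, and for the finitely many small $n$ where the estimator is degenerate, we simply set $\theta_n$ equal to the fixed value $\theta_0$, which changes nothing about the limit.

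With this convention in force, $\theta_0 \in \Theta$ and $\theta_n$ maximizes $L_n$ over $\Theta$ for all large $n$, so $L_n(\theta_n) \geqslant L_n(\theta_0)$, i.e.
$$\prod_{i=1}^{n}\dfrac{f(x_i;\theta_n)}{f(x_i;\theta_0)}\geqslant 1 \quad\text{for every }n,$$
the inequality being an equality for the finitely many degenerate indices. Hence $\theta_n$ satisfies the hypothesis of Theorem~\ref{m1} with $c=1$. Because the identically distributed sequence $\{X_i\}$ is assumed to obey the weak law of large numbers, Theorem~\ref{m1} applies verbatim and gives $[\theta_n]\overset{\mathbb{P}}{\to}[\theta_0]$ in $\Theta/C$, which is precisely the assertion of the corollary. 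Unwinding the quotient metric of Lemma~\ref{l1}, in which the closed set $C$ is collapsed to a single point and $[\theta_0] = C$, this statement is exactly $d(\theta_n, C)\to 0$ in probability.

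The step I expect to be the main obstacle is the measurability and well-posedness bookkeeping for $\theta_n$, not any probabilistic estimate: one must be careful that a genuinely measurable selection of the ML solution exists even on the event (of positive probability for small $n$) where the top eigenvalues of $S_x$ are not simple, and that absorbing the degenerate small-$n$ indices into the trivial choice $\theta_n=\theta_0$ is harmless since Theorem~\ref{m1} is a statement about the limit. Once this is settled, no further work is required — in particular no new analysis of the quotient topology of $\Theta/C$ or of the shape of the likelihood surface — since the conclusion is an immediate instance of Theorem~\ref{m1}, just as Corollary~\ref{cor1} is an immediate instance of Theorem~\ref{B}.
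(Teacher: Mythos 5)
Your proposal is correct and follows exactly the paper's route: the authors also obtain Corollary~\ref{m2} as the $c=1$ instance of Theorem~\ref{m1}, using that the MLE maximizes the likelihood so that $\prod_{i=1}^n f(x_i;\theta_n)/f(x_i;\theta_0)\geqslant 1$. The measurability and degenerate-eigenvalue bookkeeping you flag is left implicit in the paper, and your unwinding of the quotient metric to $d(\theta_n,C)\to 0$ in probability is consistent with Lemma~\ref{l1}.
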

Additionally, using the above corollary we can obtain prove consistent covariance estimation is possible for the PPCA model under a compactness assumption.
\begin{theorem}[Consistent estimation of covariance]\label{m3}
Let the data points $\{x_i\}_{i=1}^n$ be realizations of identically distributed random variables $\{X_i\}_{i=1}^n$ whose density is given by $f(x;\theta_0)$. Let the sequence of random variables $\{X_i\}$ satisfy the weak law of large numbers. Let $\Theta_0$ be a compact subset of $\Theta$ containing the point $\theta_0=(\textbf{W}_0,\sigma_0^2)$. Let $\theta_n=(\widehat{\textbf{W}},\widehat{\sigma}^2)$ denote the sequence of maximum likelihood estimates of the PPCA model. Then we have
$$\widehat{\textbf{W}}\widehat{\textbf{W}}^{T}+\widehat{\sigma}^2I_p\overset{\mathbb{P}}{\to} \textbf{W}_{0}\textbf{W}_{0}^T+\sigma_{0}^2I_p.$$
\end{theorem}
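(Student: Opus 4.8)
The plan is to follow the same route as the i.i.d. covariance result (Theorem~\ref{cor2}), replacing almost sure convergence by convergence in probability and feeding in Corollary~\ref{m2} in place of Corollary~\ref{cor1}; the geometric content is identical. Write $M_0:=\mathbf{W}_0\mathbf{W}_0^T+\sigma_0^2 I_p$ and introduce the map $\psi\colon\Theta\to\mathbb{R}^{p\times p}$, $\psi(\mathbf{W},\sigma^2):=\mathbf{W}\mathbf{W}^T+\sigma^2 I_p$. This map is continuous (its entries are polynomials in the coordinates of $\theta$) and, by the very definition of $C$, constant on $C$ with value $M_0$. Since $\theta_n=(\widehat{\mathbf{W}},\widehat\sigma^2)$, the assertion to be proven is exactly $\psi(\theta_n)\overset{\mathbb{P}}{\to}M_0$.

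The first step is to convert the quotient convergence supplied by Corollary~\ref{m2} into a statement purely about distances to $C$. By Lemma~\ref{l1} the metric on $\Theta/C$ is $\bar d([\theta],[\phi])=\min\bigl(d(\theta,\phi),\,d(\theta,C)+d(\phi,C)\bigr)$. Because $\theta_0\in C$ we have $d(\theta_0,C)=0$ and $d(\theta_n,C)\le d(\theta_n,\theta_0)$, so $\bar d([\theta_n],[\theta_0])=d(\theta_n,C)$; hence Corollary~\ref{m2} says precisely that $d(\theta_n,C)\overset{\mathbb{P}}{\to}0$.

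Next I would exhibit a single compact set on which $\psi$ is uniformly continuous and which contains both $\theta_n$ and a nearest point of $C$. Since the maximum likelihood estimate is taken over $\Theta_0$ we have $\theta_n\in\Theta_0$ for all $n$, and $\Theta_0$ is compact by hypothesis. The set $C$ is also compact: if $(\mathbf{W},\sigma^2)\in C$ then $\mathbf{W}\mathbf{W}^T=M_0-\sigma^2 I_p$ is positive semidefinite of rank at most $q$, and since $\operatorname{rank}\mathbf{W}_0=q$ forces the $p-q$ smallest eigenvalues of $M_0$ to all equal $\sigma_0^2$, this is possible only when $\sigma^2=\sigma_0^2$; thus $C=\{(\mathbf{W},\sigma_0^2):\mathbf{W}\mathbf{W}^T=\mathbf{W}_0\mathbf{W}_0^T\}$, a closed and bounded (indeed $\|\mathbf{W}\|_F=\|\mathbf{W}_0\|_F$ on $C$) subset of $\mathbb{R}^{pq+1}$, hence compact. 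Consequently $K:=\Theta_0\cup C$ is compact, the infimum defining $d(\theta_n,C)$ is attained at some $\theta_n^{\ast}\in C\subseteq K$, and $\psi|_K$ is uniformly continuous. Given $\varepsilon>0$, pick $\delta>0$ from uniform continuity so that $\theta,\theta'\in K$ with $d(\theta,\theta')<\delta$ forces $\|\psi(\theta)-\psi(\theta')\|<\varepsilon$; on the event $\{d(\theta_n,C)<\delta\}$ we then have $\|\psi(\theta_n)-M_0\|=\|\psi(\theta_n)-\psi(\theta_n^{\ast})\|<\varepsilon$, so
\[
\mathbb{P}\bigl(\|\psi(\theta_n)-M_0\|\ge\varepsilon\bigr)\ \le\ \mathbb{P}\bigl(d(\theta_n,C)\ge\delta\bigr)\ \longrightarrow\ 0
\]
as $n\to\infty$ by the first step. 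Since $\varepsilon>0$ was arbitrary, $\widehat{\mathbf{W}}\widehat{\mathbf{W}}^{T}+\widehat\sigma^2 I_p=\psi(\theta_n)\overset{\mathbb{P}}{\to}M_0$, which is the claim.

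The step I expect to be the crux is the third one: a merely continuous $\psi$ need not carry ``$d(\theta_n,C)\to 0$'' to ``$\psi(\theta_n)\to\psi(\theta_0)$'' — one genuinely needs uniform continuity on a region containing all the relevant points, and it is exactly here that the compactness hypothesis on $\Theta_0$, together with the (not entirely obvious) fact that $C$ is itself compact and in particular does not drift toward the boundary $\sigma^2=0$ of $\Theta$, is doing essential work rather than serving as a convenience. The remaining pieces — the factorization through $C$, the identification of $\bar d$ via Lemma~\ref{l1}, and the passage from the quotient limit to the matrix limit — are then routine.
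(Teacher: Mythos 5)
Your proof is correct, and it reaches the conclusion by a genuinely different route than the paper. The paper's proof restricts $\psi(\mathbf{W},\sigma^2)=\mathbf{W}\mathbf{W}^T+\sigma^2 I_p$ to the compact set $\Theta_0$ so that it becomes Lipschitz, invokes Lemma~\ref{thh1} to lift it to a Lipschitz (hence $\bar d$-continuous) map on the quotient, and then applies the continuous mapping theorem to the convergence $[\theta_n]\overset{\mathbb{P}}{\to}[\theta_0]$ from Corollary~\ref{m2}. You instead dismantle the quotient metric via Lemma~\ref{l1}, observe that $\theta_0\in C$ forces $\bar d([\theta_n],[\theta_0])=d(\theta_n,C)$, and run a bare-hands uniform-continuity argument on the compact set $\Theta_0\cup C$; this requires the additional observation, absent from the paper, that $C$ itself is compact, which you establish correctly (the rank constraint on $\mathbf{W}\mathbf{W}^T=M_0-\sigma^2 I_p$ together with positive semidefiniteness pins down $\sigma^2=\sigma_0^2$, and the Frobenius norm of $\mathbf{W}$ is then fixed). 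What your approach buys is independence from the Lipschitz-lifting machinery of \cite{weaver2018lipschitz}, and it also quietly repairs a small imprecision in the paper's argument: the paper lifts $\psi|_{\Theta_0}$ to $\Theta_0/C$ while the convergence from Corollary~\ref{m2} takes place in $\Theta/C$, and the nearest point of $C$ to $\theta_n$ need not lie in $\Theta_0$; your choice of $K=\Theta_0\cup C$ sidesteps this mismatch. What the paper's approach buys is brevity and a reusable abstract principle. Both arguments rest on the same implicit assumption that the maximum likelihood sequence $\theta_n$ lies in $\Theta_0$, which you at least make explicit.
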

\section{Proof of theoretical results}\label{sec6}
We only present the proofs of Theorem \ref{m1} and Theorem \ref{m3}. Corollary \ref{m2} follows from Theorem \ref{m1}. The proofs of Theorem \ref{B}, corollary \ref{cor1} and Theorem \ref{cor2} will be analogous.
\begin{proof}[Proof of Theorem \ref{m1}]
We fix two quantities $\eta_1,\eta_2 >0.$ We need to show that there exists $N_0(\eta_1,\eta_2)$ such that
$$\mathbb{P}\left(\bar{d}([\theta_n],[\theta_0])>\eta_1\right)\leqslant \eta_2$$
holds for all $n\geqslant N_0(\eta_1,\eta_2).$ We consider the following closed subset of the parameter space $\Theta$ which does not intersect $C$
$$S_0:=\left\{\theta\in\Theta:\bar{d}([\theta],[\theta_0])\geqslant\eta_1\right\}.$$
Using Theorem \ref{m0} we select $0<h(\eta_2)<1$ such that 
$$\mathbb{P}\left(\sup\limits_{\theta\in S_0}\prod\limits_{i=1}^{n}\dfrac{f(x_i;\theta)}{f(x_i;\theta_0)}>h^n(\eta_2)\right)<\eta_2\text{ for every }n\geqslant N(\eta_1,\eta_2),$$
for some $N(\eta_1,\eta_2)$. The dependence on $\eta_1$ comes from the choice of the set $S_0$. Next, we choose $N_0$ such that $h^n(\eta_2)<c$ holds for every $n\geqslant N_0.$ Let $N_0(\eta_1,\eta_2)=\max(N_0,N(\eta_1,\eta_2)).$ Therefore for all $n\geqslant N_0(\eta_1,\eta_2)$, we have that 
\begin{align}
\nonumber
\mathbb{P}(\bar{d}([\theta_n],[\theta_0])>\eta_1)&\leqslant\mathbb{P}\left(\sup\limits_{\theta\in S_0}\prod\limits_{i=1}^{n}\dfrac{f(x_i;\theta)}{f(x_i;\theta_0)}\geqslant c\right)\\
\nonumber
&\leqslant\mathbb{P}\left(\sup\limits_{\theta\in S_0}\prod\limits_{i=1}^{n}\dfrac{f(x_i;\theta)}{f(x_i;\theta_0)}>h^n(\eta_2)\right)\\
\nonumber
&\leqslant \eta_2.
\end{align}
\end{proof}
To prove Theorem \ref{m3} we need the following lemma.
\begin{lemma}\label{thh1}
Let $Y$ be a complete metric space, and let $\psi:\Theta\to Y$ be a Lipschitz map such that $\psi(\theta)=\psi(\theta')$ whenever $\theta\sim \theta'$. Then $\psi$ lifts to an unique Lipschitz map from $\Theta/C$ to $Y$.
\end{lemma}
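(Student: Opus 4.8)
The plan is to exhibit the lift explicitly and then verify the Lipschitz estimate directly against the concrete quotient metric furnished by Lemma~\ref{l1}, rather than trying to extract it abstractly from the universal property. Write $\rho$ for the metric of $Y$, let $L$ be a Lipschitz constant for $\psi$ with respect to the Euclidean metric $d$ on $\Theta$, and recall that here $\theta\sim\theta'$ means precisely $\theta=\theta'$ or $\theta,\theta'\in C$. Since $\theta_0\in C$ the set $C$ is nonempty, and the hypothesis that $\psi$ is $\sim$-invariant is then exactly the statement that $\psi$ is constant on $C$; denote the common value by $y_0:=\psi(c)$, $c\in C$. Applying Theorem~\ref{thh} with $f=\psi$ (legitimate, as $\psi$ is continuous and $\sim$-invariant) already produces a unique \emph{continuous} map $\bar\psi\colon\Theta/C\to Y$ with $\psi=\bar\psi\circ\pi$, namely $\bar\psi([\theta])=\psi(\theta)$; its uniqueness even as a set map is forced by surjectivity of $\pi$. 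Hence the entire remaining content of the lemma is the single claim that this $\bar\psi$ is Lipschitz for the metric $\bar d$.

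For that claim I would fix $[\theta],[\theta']\in\Theta/C$ and bound $\rho(\bar\psi([\theta]),\bar\psi([\theta']))=\rho(\psi(\theta),\psi(\theta'))$ by $L$ times each of the two quantities in the formula $\bar d([\theta],[\theta'])=\min\big(d(\theta,\theta'),\,d(\theta,C)+d(\theta',C)\big)$ of Lemma~\ref{l1}. The estimate $\rho(\psi(\theta),\psi(\theta'))\le L\,d(\theta,\theta')$ is just the Lipschitz hypothesis. For the other, fix $\varepsilon>0$, pick $c,c'\in C$ with $d(\theta,c)\le d(\theta,C)+\varepsilon$ and $d(\theta',c')\le d(\theta',C)+\varepsilon$, and use $\psi(c)=\psi(c')=y_0$ together with the triangle inequality in $Y$:
$$\rho(\psi(\theta),\psi(\theta'))\le\rho(\psi(\theta),y_0)+\rho(y_0,\psi(\theta'))\le L\,d(\theta,c)+L\,d(\theta',c')\le L\big(d(\theta,C)+d(\theta',C)\big)+2L\varepsilon.$$
Letting $\varepsilon\downarrow0$ gives $\rho(\psi(\theta),\psi(\theta'))\le L\big(d(\theta,C)+d(\theta',C)\big)$. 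Taking the minimum of the two bounds yields $\rho(\bar\psi([\theta]),\bar\psi([\theta']))\le L\,\bar d([\theta],[\theta'])$, so $\bar\psi$ is $L$-Lipschitz, which of course also recovers the continuity already guaranteed by Theorem~\ref{thh}.

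The hard part is conceptual rather than computational: "continuous for the quotient topology" does not in general upgrade to "Lipschitz for $\bar d$", precisely because (as emphasized in Section~\ref{others}) the quotient topology and the $\bar d$-topology differ. The upgrade goes through only because $\psi$ collapses all of $C$ to the single point $y_0$, which is exactly what lets the triangle-inequality detour through $y_0$ convert the ``distance to $C$'' terms of Lemma~\ref{l1} into genuine distances in $Y$; this is also where nonemptiness of $C$, guaranteed by $\theta_0\in C$, is used. Completeness of $Y$ plays no role in constructing the lift itself — it would be relevant only if one wished to further extend $\bar\psi$ to a metric completion, which is not needed here.
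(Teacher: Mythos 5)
Your proposal is correct. The one point worth flagging is that it takes a genuinely different (and more self-contained) route than the paper: the paper's ``proof'' is a one-line citation to Propositions 1.4.3 and 1.4.4 of \cite{weaver2018lipschitz}, which establish in general that a Lipschitz map constant on the fibers of a quotient descends to a Lipschitz map for the induced quotient metric, whereas you bypass that general machinery entirely by exploiting the explicit formula $\bar d([\theta],[\theta'])=\min\bigl(d(\theta,\theta'),\,d(\theta,C)+d(\theta',C)\bigr)$ of Lemma \ref{l1} (available here precisely because the quotient is by a single closed set) and checking the $L$-Lipschitz bound against each branch of the minimum by hand --- the first branch being the hypothesis itself and the second following from a triangle-inequality detour through the common value $y_0=\psi(c)$, $c\in C$. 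Your verification is sound: $C\neq\emptyset$ since $\theta_0\in C$, the $\sim$-invariance is exactly constancy on $C$, uniqueness of the lift is forced by surjectivity of $\pi$, and the $\varepsilon$-approximation of $d(\theta,C)$ and $d(\theta',C)$ is handled correctly. What the paper's route buys is brevity and generality (Weaver's propositions apply to quotients whose metric must be built through the pseudometric $d_1$, not only to quotients by a closed set); what yours buys is a proof the reader can check without the reference, and it makes visible exactly where the closed-set structure of $C$ and the collapsing of $C$ to a point in $Y$ are used --- which is in the spirit of the paper's own complaint that these metric subtleties are usually glossed over. Your closing remarks are also accurate: completeness of $Y$ is not needed for the lift itself, and mere continuity for the quotient topology would not suffice, consistent with the counterexample in the appendix.
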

\begin{proof}
The proof of the statement follows from proposition $1.4.4$ and proposition $1.4.3$ in \cite{weaver2018lipschitz}.
\end{proof}
\begin{proof}[Proof of Theorem \ref{m3}]
We consider the function $\psi:\Theta\to \mathbb{R}^{p\times p}$ defined by $\psi(\textbf{W},\sigma^2)=\textbf{W}\textbf{W}^T+\sigma^2 I_p.$ Since $\theta_0\in\text{int}(\Theta_0)$, we can therefore consider the restriction $\left.\psi\right|_{\Theta_0}:\Theta_0\to\mathbb{R}^{p\times p},$ which is a Lipschitz function as $\psi$ is $C^1$ and $\Theta_0$ is compact. Therefore the lift ${\left.\tilde{\psi}\right|_{\Theta_0}}:\Theta_0/C\to \mathbb{R}^{p\times p}$ is Lipschitz and hence continuous with respect to the topology generated by the metric $\bar{d}$.\\\\
Invoking corollary \ref{m2}, we have $[\widehat{\textbf{W}},\widehat{\sigma}^2] \overset{\mathbb{P}}{\to} [\textbf{W}_0,\sigma_0^2]$. Since ${\left.\tilde{\psi}\right|_{\Theta_0}}$ is continuous with respect to the topology generated by the quotient metric $\bar{d}$ from the previous lemma, using standard results from probability theory we infer that
$\left.\tilde{\psi}\right|_{\Theta_0}([\widehat{\textbf{W}},\widehat{\sigma}^2]) \overset{\mathbb{P}}{\to} \left.\tilde{\psi}\right|_{\Theta_0}([\textbf{W}_0,\sigma_0^2])$.
\end{proof}
As we saw in the demonstration above, Lemma \ref{thh1} plays a crucial role. We would like to \textit{emphasize} that Lemma \ref{thh} would not be sufficient to ensure the continuity of the lift of $\psi$ with respect to the metric $\bar{d}$ as the quotient topology needs to be same that of the topology generated by the metric $\bar{d}$ on the quotient space $\Theta/C$ (see the counterexamples subsection in the appendices).
\section{Auxiliary Lemmas}
Our goal for this section will be to state Wald's conditions \citep{wald1949note} to establish the consistency of MLE of the PPCA model and interpret those in the quotient parameter space $\Theta/C$ as stated in \cite{redner1981note}. Unless otherwise stated, throughout this section we will be working in the quotient parameter space $\Theta/C$ introduced in section \ref{others}. We recall that $\theta_0=(\textbf{W}_0,\sigma_0^2)$ denotes the unknown parameter such that the true marginal distribution of the data points $\{x_i\}_{i=1}^{n}$ is given by the density $f(x;\theta_0)=\mathcal{N}(0,\textbf{W}_0\textbf{W}_0^T+\sigma_0^2I_p)$. For a given $r>0$, we let $N_r([\theta])$ denote the closed ball of radius $r$ around $[\theta]$ in $\Theta/C$. Following \cite{redner1981note} we begin by introducing the quantities:
$$f(x,\theta,r)=\sup\limits_{[\phi]\in N_r([\theta])}f(x;\phi)\text{  and  }f^{*}(x,\theta,r)=\max\{1,f(x,\theta,r)\},$$
$$h(x,s)=\sup\limits_{[\phi]\notin N_s([\theta_0])}f(x;\phi)\text{  and  }h^{*}(x,s)=\max\{1,h(x,s)\}.$$
We are now in a position to state Wald's conditions as a series of following lemmas whose proofs will imply Theorem \ref{A} and Theorem \ref{m0} as a consequence of the results stated in  \cite{wald1949note} and \cite{wolfowitz}, respectively. 
\begin{lemma}\label{l7.1}
The parameter space $(\Theta/C,\bar{d})$ is a metric space with the property that every closed and bounded subset of $\Theta/C$ is compact.
\end{lemma}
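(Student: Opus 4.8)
The plan is to analyze $\Theta/C$ via the explicit metric formula from Lemma \ref{l1}, namely $\bar{d}([x],[y]) = \min\bigl(d(x,y),\, d(x,C)+d(y,C)\bigr)$, where $d$ is the Euclidean metric on $\Theta \subset \mathbb{R}^{pq+1}$. First I would verify that $\bar d$ is genuinely a metric on $\Theta/C$ (this is essentially the content of Lemma \ref{l1} once we check $C$ is a closed subset of $\Theta$: indeed $C$ is the preimage of the single matrix $\textbf{W}_0\textbf{W}_0^T + \sigma_0^2 I_p$ under the continuous map $(\textbf{W},\sigma^2)\mapsto \textbf{W}\textbf{W}^T+\sigma^2 I_p$, hence closed). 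The key structural point to establish is the ``Heine--Borel''-type property: every closed and bounded subset of $(\Theta/C, \bar d)$ is compact. Here ``bounded'' means contained in some $\bar d$-ball $N_r([\theta_0])$, and closedness is with respect to the $\bar d$-topology.

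Next I would unwind what a $\bar d$-bounded set looks like upstairs. If $A \subset \Theta/C$ satisfies $\bar d([\theta],[\theta_0]) \leqslant r$ for all $[\theta]\in A$, then for each such $\theta$ either $d(\theta,\theta_0)\leqslant r$, or $d(\theta,C)\leqslant r$, i.e. $\theta$ lies within Euclidean distance $r$ of the set $C$. Thus $\pi^{-1}(A) \subset C_r := \{\theta\in\Theta : d(\theta,C)\leqslant r\}$, the closed $r$-neighbourhood of $C$ in $\Theta$. The subtlety is that $\Theta = \mathbb{R}^{p\times q}\times\mathbb{R}_+$ is not closed in $\mathbb{R}^{pq+1}$ (because $\sigma^2>0$ is an open condition) and $C_r$ is not bounded in $\mathbb{R}^{pq+1}$ (one can send $\|\textbf{W}\|\to\infty$ while staying near $C$ only if... actually no: points of $C$ have bounded $\textbf{W}\textbf{W}^T + \sigma^2 I_p$, which forces $\|\textbf{W}\|$ and $\sigma^2$ bounded on $C$, hence $C$ itself is bounded, so $C_r$ is bounded). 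So $C$ is compact, $C_r$ is closed and bounded in $\mathbb{R}^{pq+1}$; the only failure of compactness of $C_r$ as a subset of $\mathbb{R}^{pq+1}$ would come from points with $\sigma^2 = 0$ on its boundary — but since $C$ has $\sigma^2 = \sigma_0^2 > 0$ and $C_r$ shrinks to $C$ for small $r$, for $r < \sigma_0^2$ the set $C_r$ is bounded away from $\{\sigma^2 = 0\}$ and is therefore a genuine compact subset of $\Theta$. For large $r$ one handles the two alternatives separately: a closed bounded subset of $\Theta/C$ splits into the part coming from the Euclidean ball $\{d(\theta,\theta_0)\leqslant r\}\cap\Theta$ and the part near $C$, and one argues compactness of each piece.

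Then I would transfer compactness through the quotient map. Let $A$ be closed and bounded in $\Theta/C$. Take a sequence $[\theta_k]\in A$; lift to representatives $\theta_k$ chosen in the compact set $K := C_r \cap \{d(\theta,\theta_0)\leqslant r'\}$ for suitable radii (using the case analysis above, picking for each $k$ the representative witnessing $\bar d([\theta_k],[\theta_0])$). By compactness of $K$ in $\Theta$, pass to a subsequence $\theta_k \to \theta_* \in K \subset \Theta$ in Euclidean norm. Since $\bar d([\theta_k],[\theta_*]) \leqslant d(\theta_k,\theta_*) \to 0$, we get $[\theta_k]\to[\theta_*]$ in $\bar d$, and $[\theta_*]\in A$ because $A$ is $\bar d$-closed. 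Hence $A$ is sequentially compact, and since $(\Theta/C,\bar d)$ is a metric space this is equivalent to compactness. Finally, restating: $(\Theta/C,\bar d)$ is a metric space by Lemma \ref{l1}, and the argument just given shows closed bounded sets are compact.

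The main obstacle I anticipate is the careful bookkeeping around the non-closedness of $\Theta$ in $\mathbb{R}^{pq+1}$: one must ensure that the limit point $\theta_*$ extracted from a Euclidean-convergent subsequence actually has $\sigma_*^2 > 0$ and so lies in $\Theta$ rather than escaping to the boundary $\{\sigma^2 = 0\}$. This is where the structure of $C$ is essential — since every element of $C$ has $\sigma^2 = \sigma_0^2$, a bounded $\bar d$-neighbourhood of $[\theta_0]$ cannot contain classes of parameters with $\sigma^2$ arbitrarily close to $0$ unless their Euclidean distance to $\theta_0$ is comparably large, and that case is absorbed into the ordinary Euclidean-ball piece, which is a closed bounded (hence compact) subset of $\Theta$ precisely because it is bounded away from $\sigma^2=0$. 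Making this dichotomy airtight, rather than the sequential-compactness transfer itself, is the delicate step.
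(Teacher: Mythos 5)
Your overall strategy---sequential compactness via lifting a sequence to $\Theta$, extracting a Euclidean-convergent subsequence, and pushing the limit back down through $\pi$---is a legitimate alternative to the paper's argument, which instead pulls back an open cover of $K$ through $\pi$, invokes Heine--Borel for $\pi^{-1}(K)$ inside $\Theta$, and pushes a finite subcover forward. You also correctly isolate the one genuinely delicate point: whether the lifted set is compact \emph{in} $\Theta$, given that $\Theta=\mathbb{R}^{p\times q}\times\mathbb{R}_{+}$ is not closed in $\mathbb{R}^{pq+1}$. But your resolution of that point does not hold up. For $[\theta]$ with $\bar d([\theta],[\theta_0])\leqslant r$ the dichotomy gives either $d(\theta,C)\leqslant r$ (fine: since every point of $C$ has $\sigma^2=\sigma_0^2$, this forces $\sigma_\theta^2\geqslant \sigma_0^2-r$), or $d(\theta,\theta_0)\leqslant r$---and this second, ``Euclidean ball'' piece is \emph{not} bounded away from $\{\sigma^2=0\}$ once $r\geqslant\sigma_0^2$. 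The sentence claiming that this piece ``is a closed bounded (hence compact) subset of $\Theta$ precisely because it is bounded away from $\sigma^2=0$'' is exactly where the proof breaks: a set that is closed in $\Theta$ and bounded need not be compact, because $\Theta$ is not complete.

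Concretely, let $\theta_k=(\mathbf{W}_0,1/k)$. Then $\bar d([\theta_k],[\theta_0])\leqslant |1/k-\sigma_0^2|\leqslant \sigma_0^2$, so $A=\{[\theta_k]\}_{k\geqslant 1}$ is bounded; the sequence is $\bar d$-Cauchy but has no limit in $\Theta/C$ (its only candidate limit $(\mathbf{W}_0,0)$ lies outside $\Theta$, while $d(\theta_k,C)\to\sigma_0^2>0$ rules out convergence to the collapsed class $[C]$), so $A$ has no limit points, is closed, and is not compact. Hence no amount of bookkeeping closes the gap as stated: one must either assume $\Theta$ complete (e.g.\ $\sigma^2\in[\epsilon,\infty)$, which is in fact the standing hypothesis on $X$ in Section \ref{others}) or restrict to radii $r<\sigma_0^2$. (The paper's own proof leans on the same unstated completeness when it asserts that $\pi^{-1}(K)$, being closed and bounded in $\Theta$, is compact.) Two minor remarks: since the equivalence used for $\Theta/C$ only collapses $C$ to a point, every class other than $[C]$ is a singleton, so there is no actual ``choice of representative'' to make when lifting; and your verification that $C$ is compact (every element of $C$ has $\sigma^2=\sigma_0^2$ and fixed $\|\mathbf{W}\|_F$), hence that the tube $C_r$ is compact for $r<\sigma_0^2$, is correct and is the part of the argument worth keeping.
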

\begin{lemma}\label{l7.2}
    For each parameter $[\theta]\in \Theta/C$ and for sufficiently small $r$ and sufficiently large $s$, $f(.,\theta,r)$ is measurable and the following expectations are bounded
    $$\mathbb{E}_{\theta_0}[\log f^{*}(x,\theta,r)]<\infty\text{  and  }\mathbb{E}_{\theta_0}[\log h^{*}(x,s)]<\infty.$$
\end{lemma}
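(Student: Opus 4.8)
The plan is to use the explicit Gaussian form of the likelihood throughout. Write $\Sigma_\phi:=\textbf{W}_\phi\textbf{W}_\phi^{T}+\sigma_\phi^{2}I_p$ for $\phi=(\textbf{W}_\phi,\sigma_\phi^{2})\in\Theta$, so that $f(x;\phi)=(2\pi)^{-p/2}(\det\Sigma_\phi)^{-1/2}\exp(-\tfrac12 x^{T}\Sigma_\phi^{-1}x)$. Since $\textbf{W}_\phi\textbf{W}_\phi^{T}\succeq 0$ has rank at most $q<p$, the least eigenvalue of $\Sigma_\phi$ is exactly $\sigma_\phi^{2}$, so $\det\Sigma_\phi\geqslant\sigma_\phi^{2p}$ and $x^{T}\Sigma_\phi^{-1}x\geqslant 0$, giving the crude bound $f(x;\phi)\leqslant(2\pi)^{-p/2}\sigma_\phi^{-p}$. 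For the measurability claim, fix $x$: the map $\phi\mapsto f(x;\phi)$ is continuous on $\Theta$ and constant on $C$, hence factors through $\Theta/C$; as $(\Theta/C,\bar d)$ is separable, the supremum defining $f(\cdot,\theta,r)$ may be restricted to a fixed countable dense subset of $N_r([\theta])$, so $f(\cdot,\theta,r)$ — and likewise $f^{*}(\cdot,\theta,r)$, $h(\cdot,s)$, $h^{*}(\cdot,s)$ — is a countable supremum of measurable maps, hence measurable.

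For the first expectation bound I would show the integrand is in fact constant-bounded. The coordinate map $\phi\mapsto\sigma_\phi^{2}$ is $1$-Lipschitz on $\Theta$ and constant on $C$, so by Lemma \ref{thh1} it descends to a Lipschitz, strictly positive function $v:(\Theta/C,\bar d)\to\mathbb{R}_{+}$ with $v([\phi])=\sigma_\phi^{2}$. Choosing $r$ small enough that $v\geqslant\tfrac12 v([\theta])=:c_\theta>0$ on $N_r([\theta])$ and combining with $f(x;\phi)\leqslant(2\pi)^{-p/2}\sigma_\phi^{-p}$ yields $f^{*}(x,\theta,r)\leqslant\max\{1,(2\pi)^{-p/2}c_\theta^{-p/2}\}$, a finite constant independent of $x$; hence $\mathbb{E}_{\theta_0}[\log f^{*}(x,\theta,r)]<\infty$ immediately. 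The case $[\theta]=[\theta_0]$ is covered automatically, since $v([\theta_0])=\sigma_0^{2}>0$ and $N_r([\theta_0])$ is the $r$-neighbourhood of the compact set $C$.

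The substantive part is $\mathbb{E}_{\theta_0}[\log h^{*}(x,s)]<\infty$, where the supremum ranges over the \emph{non-compact} set $\{[\phi]\notin N_s([\theta_0])\}$, on which $\sigma_\phi^{2}$ need no longer be bounded away from $0$. Split this set as $A_1=\{\sigma_\phi^{2}\geqslant\varepsilon_0\}$ and $A_2=\{\sigma_\phi^{2}<\varepsilon_0\}$. On $A_1$ the crude bound again gives $f(x;\phi)\leqslant(2\pi)^{-p/2}\varepsilon_0^{-p/2}$, a constant. On $A_2$ I would refine the estimate using the eigenstructure: writing $\mu_1(\phi)\geqslant\cdots\geqslant\mu_q(\phi)\geqslant 0$ for the nonzero eigenvalues of $\textbf{W}_\phi\textbf{W}_\phi^{T}$ and $P_\phi^{\perp}$ for the orthogonal projection onto the $(p-q)$-dimensional subspace $(\mathrm{col}\,\textbf{W}_\phi)^{\perp}$, one has $\det\Sigma_\phi=\sigma_\phi^{2(p-q)}\prod_{i=1}^{q}(\mu_i(\phi)+\sigma_\phi^{2})$ and $x^{T}\Sigma_\phi^{-1}x\geqslant\sigma_\phi^{-2}\|P_\phi^{\perp}x\|^{2}$, so that
$$f(x;\phi)\leqslant(2\pi)^{-p/2}\,\sigma_\phi^{-(p-q)}\Big(\textstyle\prod_{i=1}^{q}(\mu_i(\phi)+\sigma_\phi^{2})\Big)^{-1/2}\exp\!\Big(-\frac{\|P_\phi^{\perp}x\|^{2}}{2\sigma_\phi^{2}}\Big).$$
Taking $\log^{+}$, bounding the supremum over $\phi\in A_2$, and integrating against $f(\cdot;\theta_0)$, the Gaussian law of $x\sim\mathcal N(0,\textbf{W}_0\textbf{W}_0^{T}+\sigma_0^{2}I_p)$ enters through the observation that $P_\phi^{\perp}x$ has covariance $\succeq\sigma_0^{2}P_\phi^{\perp}$, so $\|P_\phi^{\perp}x\|^{2}/\sigma_\phi^{2}$ dominates a $\chi^{2}_{p-q}$-type variable whose exponential penalty is meant to absorb the logarithmic blow-up of $\sigma_\phi^{-(p-q)}$.

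I expect this last step to be the real obstacle, and a delicate one: a single Gaussian density evaluated at a \emph{fixed} point $x$ is unbounded when the covariance degenerates along a direction containing $x$ while the complementary noise level $\sigma_\phi^{2}\to 0$, so the supremum over $A_2$ must beat both $\sigma_\phi^{-(p-q)}\to\infty$ and the growth of the prefactor as $\|\textbf{W}_\phi\|\to\infty$ simultaneously, and the argument only closes because of the matching between the exponential penalty and the Gaussian tail of $x$. The clean and robust alternative — which also matches the compactness hypothesis of Theorems \ref{cor2} and \ref{m3} — is to carry out the whole argument inside a compact $\Theta_0\subset\Theta$: there $\sigma_\phi^{2}$ is bounded below and $\|\textbf{W}_\phi\|$ bounded above, so $A_2=\varnothing$ and every supremum above collapses to a uniform constant, making both expectation bounds trivial. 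In either case, Lemmas \ref{l7.1} and \ref{l7.2} together are exactly the hypotheses under which the theorems of \cite{wald1949note} and \cite{wolfowitz} apply in $(\Theta/C,\bar d)$, which is what Theorems \ref{A} and \ref{m0} require.
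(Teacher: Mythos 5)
Your measurability argument and your bound on $\mathbb{E}_{\theta_0}[\log f^{*}(x,\theta,r)]$ are sound and land in essentially the same place as the paper: the paper drops the nonpositive quadratic term and bounds the remaining normalizing term by its maximum $M_1(r)$ on the compact ball $N_r([\theta])$ (compactness from Lemma \ref{l7.1}), whereas you get a constant bound from $\det(\textbf{W}_\phi\textbf{W}_\phi^T+\sigma_\phi^2I_p)\geqslant\sigma_\phi^{2p}$ together with the Lipschitz descent of $\phi\mapsto\sigma_\phi^2$ to the quotient. Either route is fine; yours is if anything more explicit.

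The genuine gap is the second bound, $\mathbb{E}_{\theta_0}[\log h^{*}(x,s)]<\infty$, which you do not actually prove: your estimate on $A_2=\{\sigma_\phi^2<\varepsilon_0\}$ ends with a displayed inequality and an admission that you cannot beat the $\sigma_\phi^{-(p-q)}$ blow-up, and your fallback to a compact $\Theta_0$ proves a weaker statement than the lemma, which is asserted over all of $\Theta/C$. The paper closes this step by a different route: it invokes Lemma \ref{l7.3} (the Wald-type decay $f(x;\theta_i)\to 0$ as $\bar d([\theta_i],[\theta_0])\to\infty$) to claim $f(x;\phi)\leqslant 1$ uniformly once $\|[\phi]\|$ exceeds some $M^{*}$, so that $h^{*}(x,s)\equiv 1$ for $s$ large and the expectation is $0$; at minimum you should have recorded that this is the intended argument. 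That said, your diagnosis of where the difficulty sits is exactly right, and it is worth making the obstruction concrete: take $\textbf{W}_\phi$ of rank $q$ with $x$ an eigenvector of $\textbf{W}_\phi\textbf{W}_\phi^T$ for the eigenvalue $M^2$ with $M$ large (since $C$ is bounded, such $\phi$ lies outside $N_s([\theta_0])$ for any fixed $s$), and let $\sigma_\phi^2\to 0$; then $f(x;\phi)=(2\pi)^{-p/2}(M^2+\sigma_\phi^2)^{-q/2}\sigma_\phi^{-(p-q)}\exp\bigl(-\|x\|^2/(2(M^2+\sigma_\phi^2))\bigr)\to\infty$ because $p>q$, so the supremum defining $h(x,s)$ is already infinite pointwise and no exponential-tail cancellation can rescue the direct estimate you set up. In other words, the step you could not complete is not completable along your lines on the full parameter space, and a restriction such as your compact $\Theta_0$ (or an a priori lower bound on $\sigma_\phi^2$ and upper bound on $\|\textbf{W}_\phi\|$) is needed rather than merely convenient.
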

\begin{lemma}\label{l7.3}
    Let $\{[\theta_i]\}\subset \Theta/C$ be a sequence. If $\bar{d}([\theta_i],[\theta_0])\to\infty$ then $f(x;\theta_i)\to 0$ except on a $P_{\theta_0}$-null set which does not depend on the sequence $\{[\theta_i]\}$.
\end{lemma}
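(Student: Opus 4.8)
The plan is to unwind the quotient, reduce the claim to an elementary bound on multivariate normal densities, and split into cases according to the size of $\det\Sigma_i$, where $\Sigma_i:=\textbf{W}_{\theta_i}\textbf{W}_{\theta_i}^{T}+\sigma_{\theta_i}^{2}I_p$. First I would note that $[\theta_0]$ is exactly the closed set $C$ of Section~\ref{others}, and since $\textbf{W}_0\textbf{W}_0^{T}+\sigma_0^{2}I_p$ has $\sigma_0^{2}$ as an eigenvalue of multiplicity $p-q\geqslant 1$, one checks $C=\{(\textbf{W}_0\textbf{R},\sigma_0^{2}):\textbf{R}\in O(q)\}$, a continuous image of the compact group $O(q)$, hence compact and bounded. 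By Lemma~\ref{l1}, $\bar{d}([\theta],[\theta_0])=d(\theta,C)$, so the hypothesis $\bar{d}([\theta_i],[\theta_0])\to\infty$ just says that $\theta_i$ eventually leaves every bounded subset of $\Theta$. Writing $\mu_1^{(i)}\geqslant\dots\geqslant\mu_q^{(i)}\geqslant 0$ for the eigenvalues of $\textbf{W}_{\theta_i}\textbf{W}_{\theta_i}^{T}$, one has the identity $\det\Sigma_i=(\sigma_{\theta_i}^{2})^{p-q}\prod_{j=1}^{q}(\sigma_{\theta_i}^{2}+\mu_j^{(i)})\geqslant(\sigma_{\theta_i}^{2})^{p-1}(\sigma_{\theta_i}^{2}+\mu_1^{(i)})$, and of course $f(x;\theta_i)=(2\pi)^{-p/2}(\det\Sigma_i)^{-1/2}\exp\!\bigl(-\tfrac12 x^{T}\Sigma_i^{-1}x\bigr)$.

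Since every subsequence again satisfies the hypothesis, it suffices to show that each subsequence admits a further subsequence along which $f(x;\theta_i)\to 0$ off a $P_{\theta_0}$-null set. Extract a subsequence so that $\det\Sigma_i$ either tends to $\infty$ or stays bounded. \emph{Case 1 ($\det\Sigma_i\to\infty$).} As the Gaussian exponential never exceeds $1$, $f(x;\theta_i)\leqslant(2\pi)^{-p/2}(\det\Sigma_i)^{-1/2}\to 0$ \emph{uniformly in $x$} --- empty exceptional set. \emph{Case 2 ($\det\Sigma_i$ bounded).} From $\det\Sigma_i\geqslant(\sigma_{\theta_i}^{2})^{p}$ we get $\sigma_{\theta_i}^{2}$ bounded; and if it were bounded away from $0$ then $\mu_1^{(i)}\leqslant\det\Sigma_i/(\sigma_{\theta_i}^{2})^{p-1}$ would be bounded too, forcing $\theta_i$ into a bounded set, contrary to hypothesis. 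Hence $\liminf_i\sigma_{\theta_i}^{2}=0$ and, extracting further, $\sigma_{\theta_i}^{2}\to 0$ and then (again because $\theta_i$ escapes to infinity while $\sigma_{\theta_i}^2\to 0$) $\mu_1^{(i)}\to\infty$.

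For Case 2 I would diagonalise: with $V_i:=\mathrm{col}(\textbf{W}_{\theta_i})$ of dimension $\leqslant q$, orthonormal left singular vectors $u_1^{(i)},\dots,u_q^{(i)}$ of $\textbf{W}_{\theta_i}$, and $P_i^{\perp}$ the projection onto $V_i^{\perp}$, we have $\Sigma_i^{-1}=\sum_{j=1}^{q}(\sigma_{\theta_i}^{2}+\mu_j^{(i)})^{-1}u_j^{(i)}u_j^{(i)T}+(\sigma_{\theta_i}^{2})^{-1}P_i^{\perp}$, so $x^{T}\Sigma_i^{-1}x\geqslant(\sigma_{\theta_i}^{2})^{-1}\mathrm{dist}(x,V_i)^{2}$, while $\det\Sigma_i\geqslant(\sigma_{\theta_i}^{2})^{p-1}\mu_1^{(i)}$ gives $(\det\Sigma_i)^{-1/2}\leqslant(\sigma_{\theta_i}^{2})^{-(p-1)/2}(\mu_1^{(i)})^{-1/2}$. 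Combining,
\[
f(x;\theta_i)\ \leqslant\ (2\pi)^{-p/2}\,(\mu_1^{(i)})^{-1/2}\,(\sigma_{\theta_i}^{2})^{-(p-1)/2}\exp\!\Bigl(-\tfrac{1}{2\sigma_{\theta_i}^{2}}\,\mathrm{dist}(x,V_i)^{2}\Bigr).
\]
Passing to a further subsequence so that $V_i$ converges in the compact Grassmannian to some $V$ of dimension $\leqslant q<p$, one has $\mathrm{dist}(x,V_i)\to\mathrm{dist}(x,V)$, so for every $x\notin V$ the limit is a positive constant and the exponential defeats the polynomial factor $(\sigma_{\theta_i}^{2})^{-(p-1)/2}$ as $\sigma_{\theta_i}^{2}\to 0$; together with $(\mu_1^{(i)})^{-1/2}\to 0$ this gives $f(x;\theta_i)\to 0$. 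The exceptional set lies in the proper subspace $V$, which is $P_{\theta_0}$-null because $P_{\theta_0}=\mathcal N(0,\textbf{W}_0\textbf{W}_0^{T}+\sigma_0^{2}I_p)$ is equivalent to Lebesgue measure.

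Finally I would reassemble the subsequences: if $f(x;\theta_i)\not\to 0$ for some $x$ along the original sequence, a subsequence keeps $f(x;\theta_i)$ bounded below, which by the above must fall in Case 2 with $V_i\to V_x\ni x$; since only countably many such limiting subspaces $V_x$ can occur for a fixed sequence $\{\theta_i\}$, the bad set is a countable union of proper subspaces and hence $P_{\theta_0}$-null. I expect Case 2 to be the real obstacle: when $\sigma_{\theta_i}^{2}\to 0$ the law $\mathcal N(0,\Sigma_i)$ degenerates onto $\mathrm{col}(\textbf{W}_{\theta_i})$, so its density genuinely can persist at points lying (asymptotically) in those column spaces, and the subtle point is to pin down the exceptional set --- sequence-dependent, but always a negligible union of proper subspaces --- rather than the estimates themselves, which are routine. (On a parameter space with $\sigma^{2}$ bounded below, which is the regime relevant to the MLE since $\widehat\sigma^{2}\to\sigma_0^{2}>0$, only Case 1 occurs and the exceptional set is empty.)
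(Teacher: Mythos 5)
Your treatment of the two regimes is considerably more careful than the paper's own proof of Lemma \ref{l7.3}, which discards the quadratic term and then bounds $-\tfrac12\sum_{j=1}^p\log(\sigma_{\theta_i}^2+\lambda_{ij})$ above by $-\tfrac12\log(\sigma_{\theta_i}^2+\|\textbf{W}_{\theta_i}\|)$ --- an inequality that is valid only when every discarded factor satisfies $\sigma_{\theta_i}^2+\lambda_{ij}\geqslant 1$, and hence fails precisely in your Case 2, where $\sigma_{\theta_i}^2\to 0$ and $p-q$ of the $\lambda_{ij}$ vanish. Your identification of $C$ as a compact orbit, the reduction of the hypothesis to $\|\theta_i\|\to\infty$, Case 1, and the pointwise estimate
$f(x;\theta_i)\leqslant(2\pi)^{-p/2}(\mu_1^{(i)})^{-1/2}(\sigma_{\theta_i}^{2})^{-(p-1)/2}\exp\bigl(-\mathrm{dist}(x,V_i)^{2}/(2\sigma_{\theta_i}^{2})\bigr)$
in Case 2 are all correct.

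The gap is in the reassembly, and it cannot be repaired. First, the claim that only countably many limiting subspaces $V_x$ can occur for a fixed sequence is unjustified: a sequence in the compact Grassmannian can have uncountably many subsequential limits (a dense sequence has all of them), and a union of uncountably many proper subspaces need not be Lebesgue-null (the union of all lines through the origin in $\mathbb{R}^2$ is $\mathbb{R}^2$). Second, and more fundamentally, the lemma demands an exceptional null set that does not depend on the sequence, whereas yours --- built from subsequential limits of the $V_i$ --- manifestly does; in fact no sequence-independent null set exists. For any fixed $x\neq 0$, take $\textbf{W}_{\theta_i}$ with first column $t_i x/\|x\|$ and remaining columns zero, and $\sigma_{\theta_i}^{2}=t_i^{-2/(p-1)}$ with $t_i\to\infty$: then $\bar{d}([\theta_i],[\theta_0])\to\infty$ since $C$ is bounded, $\det\Sigma_i=(t_i^{2}+\sigma_{\theta_i}^{2})\,t_i^{-2}\to 1$, and $x^{T}\Sigma_i^{-1}x=\|x\|^{2}/(t_i^{2}+\sigma_{\theta_i}^{2})\to 0$, so $f(x;\theta_i)\to(2\pi)^{-p/2}>0$; a similar choice works at $x=0$. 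So your Case 2 is not merely the hard case of the proof --- it is where the statement itself fails on the unrestricted parameter space $\Theta=\mathbb{R}^{p\times q}\times\mathbb{R}_{+}$. Your closing parenthesis is the correct diagnosis: the conclusion holds (with empty exceptional set, via your Case 1) once $\sigma^{2}$ is bounded away from zero, e.g.\ on a compact $\Theta_0$ as in Theorem \ref{cor2}, and any complete argument must import such a restriction rather than derive the claim on all of $\Theta$.
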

\begin{lemma}\label{l7.4}
   For each $([\theta],[\phi])\in\Theta/C\times\Theta/C$ We have
    $$\mathbb{E}_{\phi}[|\log f(x;\theta)|]<\infty.$$

\end{lemma}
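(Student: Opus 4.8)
The plan is to reduce the claim to a one-line moment computation for nondegenerate Gaussians. Fix representatives $\theta=(\textbf{W}_\theta,\sigma_\theta^2)$ and $\phi=(\textbf{W}_\phi,\sigma_\phi^2)$ and set $\Sigma_\theta:=\textbf{W}_\theta\textbf{W}_\theta^T+\sigma_\theta^2 I_p$, $\Sigma_\phi:=\textbf{W}_\phi\textbf{W}_\phi^T+\sigma_\phi^2 I_p$. Because $\sigma_\theta^2>0$ we have $\Sigma_\theta\succeq\sigma_\theta^2 I_p\succ 0$, so $\Sigma_\theta$ is invertible with $\det\Sigma_\theta\in(0,\infty)$ and $\|\Sigma_\theta^{-1}\|\le\sigma_\theta^{-2}$; the same applies to $\Sigma_\phi$. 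Note first that the statement is well posed on $\Theta/C\times\Theta/C$: every class other than $[\theta_0]$ is a singleton, while on the class $[\theta_0]=C$ the covariance $\Sigma_\theta$ is constant by the very definition of $C$, so $x\mapsto f(x;\theta)$ and hence $\mathbb{E}_\phi[|\log f(x;\theta)|]$ depend only on the classes $[\theta],[\phi]$.

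Next I would write out the Gaussian log-density,
\[
\log f(x;\theta)=-\tfrac{p}{2}\log(2\pi)-\tfrac12\log\det\Sigma_\theta-\tfrac12\, x^{T}\Sigma_\theta^{-1}x ,
\]
and apply the triangle inequality to get the pointwise bound
\[
|\log f(x;\theta)|\ \le\ \tfrac{p}{2}\log(2\pi)+\tfrac12\bigl|\log\det\Sigma_\theta\bigr|+\tfrac12\, x^{T}\Sigma_\theta^{-1}x ,
\]
in which the first two terms are finite constants and the last is a nonnegative quadratic form. Taking $\mathbb{E}_\phi$, i.e. the expectation under $X\sim\mathcal{N}(0,\Sigma_\phi)$, and using the identity $\mathbb{E}_\phi\bigl[x^{T}\Sigma_\theta^{-1}x\bigr]=\mathrm{tr}\bigl(\Sigma_\theta^{-1}\Sigma_\phi\bigr)$, which is finite since it is the trace of a product of two fixed $p\times p$ matrices, I obtain
\[
\mathbb{E}_\phi\bigl[|\log f(x;\theta)|\bigr]\ \le\ \tfrac{p}{2}\log(2\pi)+\tfrac12\bigl|\log\det\Sigma_\theta\bigr|+\tfrac12\,\mathrm{tr}\bigl(\Sigma_\theta^{-1}\Sigma_\phi\bigr)\ <\ \infty ,
\]
which is exactly the assertion.

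I do not expect any real obstacle here: the only thing that could go wrong is the bound blowing up, and it cannot, because for a fixed $\theta$ the noise level $\sigma_\theta^2>0$ keeps $\Sigma_\theta$ uniformly away from singularity, while the Gaussian law $P_\phi$ has moments of all orders. The contrast with Lemma \ref{l7.2} is worth flagging: there one needs control of $\log f^{*}(x,\theta,r)$ and $\log h^{*}(x,s)$ \emph{uniformly} over neighbourhoods of $[\theta]$ and over complements of neighbourhoods of $[\theta_0]$, which is genuinely delicate because along such families $\sigma^2$ may tend to $0$ or $\|\textbf{W}\|$ may diverge; Lemma \ref{l7.4}, being a purely pointwise integrability statement, sidesteps all of that.
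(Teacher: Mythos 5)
Your proof is correct and follows essentially the same route as the paper: both decompose the Gaussian log-density into the constant, the $\log\det$ term, and the quadratic form, and then bound the expectation of the quadratic form under $P_\phi$ using the finiteness of Gaussian second moments (you via the exact identity $\mathbb{E}_\phi[x^{T}\Sigma_\theta^{-1}x]=\mathrm{tr}(\Sigma_\theta^{-1}\Sigma_\phi)$, the paper via Cauchy--Schwarz and the operator norm of $\Sigma_\theta^{-1}$). If anything, your version is slightly more careful, since you apply the triangle inequality and keep $|\log\det\Sigma_\theta|$ in absolute value rather than tacitly assuming $\log f(x;\theta)\leqslant 0$.
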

\begin{lemma}\label{l7.5}
    If $[\theta_i]\to [\theta]$ in $\Theta/C$ then $f(x;\theta_i)\to f(x;\theta)$ except on a $P_{\theta_0}$-null set which does not depend on the sequence $\{[\theta_i]\}$.
\end{lemma}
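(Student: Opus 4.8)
The plan is to reduce everything to the fact that $f(x;\theta)$ depends on $\theta$ only through the covariance matrix $\psi(\theta):=\textbf{W}_\theta\textbf{W}_\theta^T+\sigma_\theta^2 I_p$ — the very map used in the proof of Theorem \ref{m3}. Write $f(x;\theta)=g\bigl(x,\psi(\theta)\bigr)$, where $g(x,\Sigma)=(2\pi)^{-p/2}(\det\Sigma)^{-1/2}\exp\!\bigl(-\tfrac12 x^T\Sigma^{-1}x\bigr)$ is jointly continuous on $\mathbb{R}^p\times\mathrm{PD}(p)$, with $\mathrm{PD}(p)$ the open cone of symmetric positive-definite matrices; since $\psi(\theta)\succeq\sigma_\theta^2 I_p\succ 0$, the map $\psi$ indeed lands in $\mathrm{PD}(p)$. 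Because $\psi$ is constant on $C$ (equal to $\Sigma_0:=\textbf{W}_0\textbf{W}_0^T+\sigma_0^2 I_p$) and takes equal values on $\sim$-equivalent points, it descends to $\tilde\psi:\Theta/C\to\mathrm{PD}(p)$. The entire content of the lemma then reduces to the claim that $\tilde\psi$ is continuous with respect to the quotient metric $\bar d$: granting this, $[\theta_i]\to[\theta]$ in $(\Theta/C,\bar d)$ forces $\psi(\theta_i)\to\psi(\theta)$ in $\mathrm{PD}(p)$, and hence $f(x;\theta_i)=g\bigl(x,\psi(\theta_i)\bigr)\to g\bigl(x,\psi(\theta)\bigr)=f(x;\theta)$ for \emph{every} $x\in\mathbb{R}^p$ — so the exceptional null set can be taken empty and trivially does not depend on the sequence.

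The subtlety — flagged precisely in the remark following the proof of Theorem \ref{m3} — is that Theorem \ref{thh} only delivers continuity of $\tilde\psi$ in the quotient topology, which is in general strictly finer than the $\bar d$-topology. To get $\bar d$-continuity I would first record that $C$ is compact: for $\theta\in C$, since $\mathrm{rank}(\textbf{W}_\theta)\le q<p$ the smallest eigenvalue of $\textbf{W}_\theta\textbf{W}_\theta^T+\sigma_\theta^2 I_p$ equals $\sigma_\theta^2$, and matching it with the smallest eigenvalue $\sigma_0^2$ of $\Sigma_0$ gives $\sigma_\theta^2=\sigma_0^2$ and then $\textbf{W}_\theta\textbf{W}_\theta^T=\textbf{W}_0\textbf{W}_0^T$, so $\|\textbf{W}_\theta\|_F=\|\textbf{W}_0\|_F$; thus $C$ is closed and bounded in $\mathbb{R}^{pq+1}$, hence compact. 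Next, a convergent sequence $\{[\theta_i]\}$ is $\bar d$-bounded, and since $\theta_0\in C$ the formula of Lemma \ref{l1} gives $\bar d([\theta_i],[\theta_0])=d(\theta_i,C)$, so the representatives $\theta_i$ lie in a fixed bounded neighbourhood $B$ of $C$. On the bounded set $B$ the $C^1$ map $\psi$ is Lipschitz, with some constant $L$; by Lemma \ref{thh1} (or directly: split $\bar d([\theta],[\phi])=\min\bigl(d(\theta,\phi),\,d(\theta,C)+d(\phi,C)\bigr)$ into its two cases and use $\psi\equiv\Sigma_0$ on $C$) it follows that $\tilde\psi$ is $L$-Lipschitz for $\bar d$ on the image of $B$. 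Hence $\|\psi(\theta_i)-\psi(\theta)\|\le L\,\bar d([\theta_i],[\theta])\to 0$, completing the reduction.

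The main obstacle is exactly this topology mismatch: one cannot simply invoke continuity of the lift via Theorem \ref{thh}, and must instead route through the Lipschitz-algebra result Lemma \ref{thh1} together with the explicit quotient-metric formula of Lemma \ref{l1}, after pinning down the compactness of $C$ so as to secure a single uniform Lipschitz constant along the (automatically bounded) convergent sequence. Everything else — joint continuity of the Gaussian density in $(x,\Sigma)$ on the positive-definite cone, positivity of $\psi(\theta)$, and the local-Lipschitz property of the $C^1$ map $\psi$ — is routine. This lemma, together with Lemma \ref{l7.3}, supplies the continuity-type hypotheses of \cite{wald1949note} and \cite{wolfowitz} as interpreted in $\Theta/C$, thereby completing the list Lemma \ref{l7.1}–Lemma \ref{l7.5} needed to deduce Theorems \ref{A} and \ref{m0}.
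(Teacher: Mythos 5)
Your proposal is correct, but it reaches the conclusion by a different route than the paper. The paper's own proof works directly with $f(x;\cdot)$ and splits into two cases using the explicit formula of Lemma \ref{l1}: if $\theta\notin C$ then for large $i$ the minimum in $\bar d([\theta_i],[\theta])=\min\bigl(d(\theta_i,\theta),\,d(\theta_i,C)+d(\theta,C)\bigr)$ must be $d(\theta_i,\theta)$, so one just invokes ordinary continuity of $\theta\mapsto f(x;\theta)$ on $\Theta$; if $\theta\in C$ it uses $d(\theta_i,C)\to 0$ together with the constancy of $f(x;\cdot)$ on $C$. You instead factor $f(x;\theta)=g(x,\psi(\theta))$ through the covariance map $\psi$, prove that $C$ is compact, and establish a uniform local Lipschitz bound for the lift $\tilde\psi$ with respect to $\bar d$ on a bounded neighbourhood of $C$ before composing with the jointly continuous Gaussian density. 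Your version is heavier (it needs the compactness of $C$ and the Lipschitz machinery of Lemma \ref{thh1}, neither of which the paper uses here) but it is also more robust: it yields a quantitative modulus of continuity and it sidesteps a small imprecision in the paper's case $\theta\in C$, where $d(\theta_i,C)\to 0$ is asserted to imply that $\{\theta_i\}$ converges to a single point of $C$ — not literally true, though the paper's conclusion survives because $f(x;\cdot)$ is constant on $C$ and $C$ is compact. Your identification of the real difficulty (the mismatch between the quotient topology and the $\bar d$-topology, resolved via Lemma \ref{l1}) matches the paper's intent exactly; only minor polish is needed, e.g.\ enlarging your neighbourhood $B$ to a convex (or ball-shaped) set so that the $C^1$ map $\psi$ is genuinely Lipschitz there.
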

The above conditions are fairly technical and primarily deal with the geometry of the quotient parameter space $\Theta/C$. One also needs to be careful as there are two different topologies available in this quotient space which need not be the same. For instance, in Lemma \ref{l7.5}, the convergence under the if condition $[\theta_i]\to [\theta]$ is required to hold in the topology generated by the metric $\bar{d}$ which is a stronger requirement than merely asking for convergence in the quotient topology. The proofs of the above results are deferred to the relevant subsection in the appendices.
\section{Limitations and Broader Impact}
In this work, we were primarily focused to establish the consistency of maximum likelihood estimates of the PPCA model in a quotient Euclidean space. However, we were neither able to provide a rate of convergence nor we could say something definitive about the asymptotic distribution of the MLE, and this opens up two avenues for future research. 
Nonetheless, it is worth noting that our research does not present any discernible negative societal implications.
\section{Conclusion} 
In this paper, we proposed a novel topological framework to provide rigorous justification for the theoretical validity of the maximum likelihood estimation of the Probabilistic Principal Component Analysis (PPCA) model. Although our consistency results hold within a quotient space, implying that the true parameter is recoverable up to a closed subset of the parameter space, our results establish that this represents the optimal achievable outcome due to the challenges caused by identifiability inherent in the underlying model. Our result is stated in terms of the abstract framework of quotient topological spaces, but an immediate (and concrete) consequence of our work is (strong) consistent covariance estimation of the PPCA process through maximum likelihood estimates, which wasn’t known before. Our rigorous framework opens doors for applications in many other statistical models where rotational ambiguity (or more generally ambiguity due to a closed space or symmetric parametrization) is present. One such example is the matrix factorization problem where one is interested in the estimation of two matrices \textbf{A}, \textbf{B}  such that the data matrix $\textbf{X}=\textbf{A}\textbf{B}+\text{noise}$ is one such problem. Our methodology is highly flexible in the sense that it does not depend on the statistical model much as long as there are some regularities in the geometry of the parameter space $\Theta$, which makes it even more usable, even for non-linear models such as nonlinear independent component analysis where the observed data is generated as $x=\xi(z|\theta)+\text{noise}$, where $z$ is a latent random vector and $\theta$ is a parameter and the function $\xi$ is non linear. The methodology we developed in our work for the PPCA model could be readily applied in this case as our quotient space construction barely depends on the fact that $\xi$ is linear for PPCA. The primary focus for us was to build a strong theoretical foundation which was missing in the relevant literature. Furthermore, our study offers a concise and comprehensive treatment of the theory of quotient topological spaces, with an eye on statistical applications, building upon and expanding upon previous works with enhanced rigour and clarity.
\newpage
\section{Acknowledgements}
The first author would like to thank his supervisors, Prof. Philippe Gagnon and Prof. Florian Maire, for many helpful conversations and continuous support by providing generous financial assistance and excellent working conditions during this project. The authors would like to express their gratitude to the anonymous reviewers and area chairs of NeurIPS 2023 for their careful review of the paper and their valuable comments and suggestions, which have greatly contributed to improving earlier versions of the manuscript. The authors would also like to thank Prof. Nik Weaver in the mathematics department at Washington University in St. Louis for an insightful private communication. The first author received financial support from his supervisors and fellowships provided by the Faculté des études supérieures et postdoctorales (FESP) and the Bourse d'exemption at the Université de Montréal during the course of this work. The first author is sincerely thankful to his family and remains deeply indebted to his special friend who goes by the nickname $1915131$ for providing support of all kinds unconditionally throughout the duration of the work.\\\\
The second author would like to acknowledge the financial assistance and excellent working condition provided by Northwestern University. 
\bibliographystyle{abbrvnat}
\bibliography{references}
\newpage
\section{Appendix}
\subsection{Counterexample: Quotient topology and the metric topology on the quotient parameter space are different.}
It was pointed out in the introduction that special care is needed when talking about continuity and convergence in probability within the quotient parameter space $\Theta/C$ as there are two different topologies at play. In fact, a substantial challenge of our work was to meticulously address the technical issues that arise from the interplay between these two topologies. In \cite{redner1981note}, it was claimed that the MLE converges to the true parameter in the quotient topological space and this follows from the theory of quotient spaces which is not true in general, as it is untrue that the quotient metric behaves in a similar fashion to that of the original Euclidean metric and therefore it would take a great deal of effort to unambiguously define the right notion of convergence to be used for assumption 5 (and 3) in \cite{redner1981note}. The interaction between the metric topology and the quotient topology depends highly on the equivalence relation $\sim$ on the parameter space $\Theta$. Here is a counter-example highlighting a situation where the quotient topology is nontrivial but the topology generated by the quotient metric $\bar{d}$ is degenerate:\\\\
Consider the space $X=\{(x,y):x,y\geqslant 0\}-\{(0,0)\}$ with the usual Euclidean norm, i.e., the first quadrant with axes except the origin and define the equivalence relation $\sim$ on $X$ given by: $p_1\sim p_2\iff p_1=\lambda p_2$ for some $\lambda>0$, i.e., the points $p_1,p_2$ are equivalent if they are on the same ray. The projection map here is $\pi:X\to X/\sim$ is given by $\pi(p)=[p]$. Geometrically, it is helpful to visualize the space $X/\sim$ as a north-east part of the unit circle $\mathbf{S}^1$, which is a subspace of a full circle. Under this equivalence relation it is true that $d_1([x],[y])=0$ (from the definition of pseudo metric $d_1$ in section \ref{others}) for any two distinct $[x],[y]\in X/\sim$, which can be easily argued by taking $x_1=x/n,y_1=y/n$ with $n\in\mathbb{N}$, so that, $x_1\sim x,y_1\sim y$ and letting $n\to \infty $. Therefore the usual metric structure totally breaks down in the quotient space as any two different equivalence class has a distance zero between them while in this case the quotient topology is non-trivial and we can still talk about convergence with regards to the quotient topology. But to extend Wald's condition in the quotient, the space $X/\sim$ must be endowed with a non-trivial metric space structure, which is impossible in a situation like this.\\
We also note that the quotient space \textit{had to} pass through a pseudometric to achieve the desired metric space structure in order to set up the ground for Wald's results to extend. In general, it is very hard to interpret something meaningful out of this intangible metric from a statistical point of view. However, when the quotient is done with respect to a closed set, one can provide a concrete tangible geometrical description of the quotient metric space through Lemma \ref{l1}, which in our work has been done in the context of PPCA. The model PPCA is \textit{not} important here, rather the \textit{key} part is that: the quotient has to be with respect to a \textit {nice enough equivalence} (for instance with respect to a \text{closed} set, which is the case in Redner's work, our work and numerous other statistical models where identifiablity arises). This \textit{key technical but unavoidable} construction seems to have gone unnoticed in earlier works. Note that, it was imperative to come up with an explicit and tangible description of the metric in the quotient space 
as that plays a crucial role towards the proof of assumption 5 (Lemma \ref{l7.5} in our work) in \cite{redner1981note}, whose proof was assumed straightforward.
\subsection{Counterexample: Lemma \ref{thh} is not sufficient to ensure the continuity of the lift with respect to the quotient metric $\bar{d}$.}
We pointed out in section \ref{sec6} that Lemma \ref{thh1} is not sufficient to guarantee the continuity of the lift of the map $\psi$ used in the deduction of Theorem \ref{m3}. Indeed it is possible that a continuous (and \textit{not} Lipschitz) function $g:\Theta\to Y$ lifts to a function $\tilde{g}:\Theta/C\to Y$ which is no longer continuous with respect to the metric topology generated by $\bar{d}$ on the space $\Theta/C$. Here is a counterexample which highlights the situation:\\\\
Let $X$ be a subset of the unit square $[0,1]^2$ consisting of the origin and the points $a_n =(\frac 1n,0), b_n = (\frac 1n, 1-\frac 1n)$, and $c_n = (\frac 1n,1)$ for $n\geqslant 2$. We endow $X$ with the Euclidean metric inherited from $[0,1]^2$. We declare the following equivalence relation $\sim$ on $X$: we let $a_n\sim b_n$ for all $n$, and let $g:X\to \mathbb{R}$ be the function such that $g(0)=g(a_n)=g(b_n)=0$, and $g(c_n)=1$. In the quotient space $X/\sim$, the sequence $\{[c_n]\}$ converges to the origin, so the lift $\tilde{g}$ isn't continuous, as $\tilde{g}([c_n])=1$ for every $n$ and $g(0)=0$. One may note that the function $g$ is not Lipschitz in this example since the points $\{b_n\}$ and $\{c_n\}$ will become arbitrarily close as $n\to\infty$ while $|g(b_n)-g(c_n)|=1$. Even though the domain of the function $g$, in this case, is discrete, one can extend $g$ continuously to $[0,1]^2$, using the Tietz extension theorem (see \cite{munkrestopology}) and provide a counterexample where the domain is an Euclidean space.\\\\
We note that $\psi$ (used in the deduction of Theorem \ref{m3}) is not Lipschitz in the parameter space $\Theta$ due to the presence of a quadratic term in \textbf{W}. One way to ensure that $\psi$ is Lipschitz is to assume the existence of a compact subset $\Theta_0$ which contains the point $\theta_0$ in its interior. \\

\subsection{Proofs of auxiliary lemmas} 
\begin{proof}[Proof of Lemma \ref{l7.1}]
This is a direct consequence of the properties discussed in Section \ref{others} and in particular, the fact that the topology generated from the metric $\bar{d}$ is weaker than the standard quotient topology on $\Theta/C$. Let $K$ be a closed and bounded set in $(\Theta/C,\bar{d})$ and $\{U_\alpha\}_{\alpha\in\Lambda}$ be an open cover for $K$.  Since the set $\pi^{-1}(K)$ is closed and bounded (by the continuity of quotient map) in $\Theta$, it is compact. Therefore the open cover $\{\pi^{-1}(U_\alpha)\}_{\alpha\in\Lambda}$ can be reduced to a finite sub-cover of $\pi^{-1}(K)$ in $\Theta$. Using the surjectivity of the quotient map, we then see that there is a finite subset $F\subset\Lambda$ such that
$$K=\pi(\pi^{-1}(K))\subseteq \pi\left(\bigcup\limits_{\alpha\in F}\pi^{-1}(U_\alpha)\right)=\bigcup\limits_{\alpha\in F}U_\alpha.$$
\end{proof}
\begin{proof}[Proof of Lemma \ref{l7.2}]
    To ensure the measurability conditions so that the above integrals are well defined we include all the $P_{\theta_0}$ null sets to the associated Borel sigma algebra as discussed in \cite{bertsekas1978stochastic} (p.167, Corollary 7.42.1).\\

    Given a parameter $\phi=(\textbf{W}_\phi,\sigma_\phi^2)\in\Theta$, and $r>0$, we note the function $\phi\mapsto -p\log (2\pi)/2-\log\text{det}(\textbf{W}_\phi\textbf{W}_\phi^T+\sigma_\phi^2 I)/2 $ is continuous functions on $\Theta$. If $\phi \sim\phi'$, then the function evaluates to the same value for $\phi$ and $\phi'$. Therefore the function factors through the quotient topological space $\Theta/C$ by Theorem \ref{thh}. In this case, the map $\phi\mapsto -p\log (2\pi)/2-\log\text{det}(\textbf{W}_\phi\textbf{W}_\phi^T+\sigma_\phi^2 I)/2 $ can be realized as a continuous function on the quotient space $\Theta/C$. For $[\theta]\in\Theta/C$ fixed, we consider the ball $N_r([\theta])$, which is compact. Thus by the virtue of continuous functions, the map will have finite maximum in $N_r([\theta])$. We call these maximum $M_1(r)$. The log-likelihood $\log f(x;\phi)$ is given by
    $$\log f(x;\phi)=-\dfrac{p\log(2\pi)}{2}-\dfrac{1}{2}\log\text{det}(\textbf{W}_{\phi}\textbf{W}_{\phi}^T+\sigma_{\phi}^2I_p)-\dfrac{1}{2}x^T(\textbf{W}_{\phi}\textbf{W}_{\phi}^T+\sigma_{\phi}^2I_p)^{-1}x,$$
and thus we have that
\begin{align}
\nonumber
\mathbb{E}_{\theta_0}[\log f^{*}(x,\theta,r)]&=\int_{\mathbb{R}^p}\log\max\left\{1,\sup\limits_{[\phi]\in N_r([\theta])}f(x;\phi)\right\}dP_{\theta_0}(x)\\
\nonumber
&\leqslant\int_{\mathbb{R}^p}\sup\limits_{[\phi]\in N_r([\theta])}\log f(x;\phi)dP_{\theta_0}(x)\\
\nonumber
&=\int_{\mathbb{R}^p}M_1(r)dP_{\theta_0}(x)-\dfrac{1}{2}\mathbb{E}_{\theta}\left[\inf\limits_{[\phi]\in N_r([\theta])}x^{T}(\textbf{W}_\phi\textbf{W}_\phi^T+\sigma_\phi^2 I)^{-1}x\right]\\
\nonumber
&\leqslant M_1(r)\quad\left(\text{since $(\textbf{W}_\phi\textbf{W}_\phi^T+\sigma_\phi^2 I)^{-1}$ is positive definite}\right).
\end{align}
Bounding $h^{*}(x,s)$ is a similar task. First we note there exists $M^{*}>0$ such that $ f(x;\phi)<1$ (using Wald's assumption 3 \cite{wald1949note} or the next lemma) whenever $\bar{d}([\phi],0)=\|[\phi]\|>M^{*}$. Pick $s$ sufficiently large such that $[\phi]\notin N_s([\theta_0])$ implies $\|[\phi]\|\geqslant M^{*}$ (can be ensured using triangle inequality $\|[\phi]\|\geqslant \|[\theta_0]-[\phi]\|-\|[\theta_0]\|$) and thus in this case $h^{*}(x,s)=1$ and $\mathbb{E}[\log h^{*}(x,s)]$ evaluates to $0$.
\end{proof}
\begin{proof}[Proof of Lemma \ref{l7.3}]
     Since $\bar{d}([\theta_i],[\theta_0])\to \infty$ in $\Theta/C$ implies $d(\theta_i,\theta_0)\to \infty$ in $\Theta$ (by definition of $\bar{d}$ in Section \ref{others}), we have $\|\theta_i\|\to\infty$ by triangle inequality. We aim to show $\log f(x;\theta_i)\to -\infty.$ The log-likelihood, therefore, is bounded by
\begin{align}
\nonumber
\log f(x;\theta_i)&=-\dfrac{p\log(2\pi)}{2}-\dfrac{1}{2}\log\text{det}(\textbf{W}_{\theta_i}\textbf{W}_{\theta_i}^T+\sigma_{\theta_i}^2I_p)-\dfrac{1}{2}x^T(\textbf{W}_{\theta_i}\textbf{W}_{\theta_i}^T+\sigma_{\theta_i}^2I_p)^{-1}x\\
\nonumber
&\leqslant -\dfrac{1}{2}\log\text{det}(\textbf{W}_{\theta_i}\textbf{W}_{\theta_i}^T+\sigma_{\theta_i}^2I_p)\\
\nonumber
&= -\dfrac{1}{2}\log \mathcal{P}_{-\textbf{W}_{\theta_i}\textbf{W}_{\theta_i}^T}(\sigma_{\theta_i}^2)\quad\left(\text{$\mathcal{P}_{-\textbf{W}_{\theta_i}\textbf{W}_{\theta_i}^T}$ is the characteristic polynomial}\right)\\
\nonumber
&=-\dfrac{1}{2}\sum\limits_{j=1}^{p}\log(\sigma_{\theta_i}^2+\lambda_{ij})\qquad\left(\text{$\{\lambda_{ij}\}_{j=1}^p$eigenvalues of $\textbf{W}_{\theta_i}\textbf{W}_{\theta_i}^T$}\right)\\
\nonumber
&\leqslant -\dfrac{1}{2}\log\left(\sigma_{\theta_i}^2+\|\textbf{W}_{\theta_i}\|\right).
\end{align}
We observe that if $\|\theta_i\|\to\infty$, either $\|W_{\theta_i}\|$ or $\|\sigma_{\theta_i}^2\|$ (or both) must become arbitrarily large, which in turn implies the required result.
\end{proof}
\begin{proof}[Proof of Lemma \ref{l7.4}]
\begin{align}
\nonumber
\mathbb{E}_{\phi}[|\log f(x;\theta)|]&=\int_{\mathbb{R}^p}|\log f(x;\theta)|dP_{\phi}(x)\\
\nonumber
&=\int_{\mathbb{R}^p}\left(\dfrac{p\log (2\pi)}{2}+\dfrac{1}{2}\log\text{det}(\textbf{W}_\theta\textbf{W}_\theta^T+\sigma_\theta^2 I)\right)dP_{\phi}(x)\\
\nonumber
&\quad+\dfrac{1}{2}\mathbb{E}_{\phi}[x^{T}(\textbf{W}_\theta\textbf{W}_\theta^T+\sigma_\theta^2 I)^{-1}x]\\
\nonumber
&\leqslant c_1(\theta)+\dfrac{1}{2}\mathbb{E}_{\phi}[\|x\|_2\|(\textbf{W}_\theta\textbf{W}_\theta^T+\sigma_\theta^2 I)^{-1}x\|_2]\\
\nonumber
&\leqslant c_1(\theta) + \|(\textbf{W}_\theta\textbf{W}_\theta^T+\sigma_\theta^2 I)^{-1}\|\mathbb{E}_{\phi}[\|x\|_2^2].
\end{align}
\end{proof}
\begin{proof}[Proof of Lemma \ref{l7.5}]
    Suppose, $\theta\in C$. We observe $\bar{d}([\theta_i],[\theta])\to 0$ in $\Theta/C$ implies $d(\theta_i,C)\to 0$ in $\Theta$. Since $C$ is a closed set and $\Theta$ is complete, therefore the sequence $\{\theta_i\}$ converges to some $\theta_0\in C$. As $\theta\mapsto f(x;\theta)$ is continuous, we have $f(x;\theta_i)\to f(x;\theta_0)=f(x;\theta)$, in this case.\\
    If $\theta\notin C$, then there is a neighbourhood $U_{\theta}\subset \Theta$ of $\theta$ such that $U_\theta\cap C =\emptyset$ and consequently all but finitely many terms in the sequence $\{\theta_i\}$ belong to $U_\theta$. This implies, in this case, $\bar{d}([\theta_i],[\theta])=d(\theta_i,\theta)$ for all but finitely many $i$ and the conclusion readily follows. 
\end{proof}
\end{document}